\title{Generative neural networks for characteristic functions}
\author{Florian Brück }
\date{}
\newcommand{\e}{\mathbb{E}}
\newcommand{\p}{\mathbb{P}}
\newcommand{\lk}{\left[ }
\newcommand{\rk}{\right] }
\newcommand{\lc}{\left(}
\newcommand{\rc}{\right)}
\newcommand{\R}{\mathbb{R}}
\newcommand{\C}{\mathbb{C}}
\newcommand{\N}{\mathbb{N}}
\newcommand{\id}{\mathbf{1}}
\newcommand{\sumn}{\sum_{i=1}^n}
\newcommand{\iinn}{{i\in\N}}
\newcommand{\leqn}{1\leq i\leq n}
\newcommand{\rmd}{\mathrm{d}}
\newcommand{\sumijn}{\sum_{\substack{i,j=1 \\ i\not=j} }^n}
\DeclareMathOperator{\argmin}{argmin}
\DeclareMathOperator{\mmd}{MMD}
\DeclareMathOperator{\sign}{sign}
\newcommand{\hmmd}{\widehat{\mmd^2}}
\newcommand{\bmX}{\bm{X}}
\newcommand{\bmx}{\bm{x}}
\newcommand{\bmz}{\bm{z}}
\newcommand{\bmY}{\bm{Y}}
\newcommand{\bmy}{\bm{y}}
\newcommand{\bmZ}{\bm{Z}}
\newcommand{\bmW}{\bm{W}}
\newcommand{\bmtheta}{{\bm{\theta}}}
\theoremstyle{plain}
\newtheorem{thm}{Theorem}[section]
\newtheorem{lem}{Lemma}
\newtheorem{rem}{Remark}
\newtheorem{cor}[thm]{Corollary}
\newtheorem{ass}{Assumption}
\begin{document}

\maketitle

\begin{abstract}
We provide a simulation algorithm to simulate from a (multivariate) characteristic function, which is only accessible in a black-box format. The method is based on a generative neural network, whose loss function exploits a specific representation of the Maximum-Mean-Discrepancy metric to directly incorporate the targeted characteristic function. The algorithm is universal in the sense that it is independent of the dimension and that it does not require any assumptions on the given characteristic function. Furthermore, finite sample guarantees on the approximation quality in terms of the Maximum-Mean Discrepancy metric are derived. The method is illustrated in a simulation study.
\end{abstract}

KEYWORDS: Characteristic function, generative modeling, simulation algorithm

\section{Introduction}

The characteristic function is one of the fundamental objects in probability theory, since it uniquely characterizes the distribution of a real-valued random vector in a concise way. Its properties often allow to simplify theoretical derivations, especially when sums of independent random variables are investigated. Further, it also allows to easily derive certain properties of the underlying random vector, such as its moments. A disadvantage of working with characteristic functions is that simulation from the corresponding random vector is not straightforward when there is no further information about the underlying random vector. As Devroye comments on the simulation from a (univariate) characteristic function in \cite{devroye2006}: \textit{``If the characteristic function is known in black-box format, very little can be done in a universal manner''}. This poses major challenges in applications, since simulation from the corresponding random vector is often essential to asses certain quantities of interest. 

Several approaches to simulate from a random vector that corresponds to a given characteristic function seem to naturally come to mind. There are various ways of ``inverting'' the characteristic function to obtain its corresponding (Lebesgue) density or distribution function, such as the Fourier inversion formula, L\'evy's characterization theorem and several other variants thereof. However, even though the theory provides clear constructive ways of recovering a corresponding density or distribution function, it is still a major challenge to follow them in practice. The main reason for this difficulty is that all these ways of recovering the corresponding density or distribution function require the evaluation of many integrals. More specifically, for every point at which the density or distribution function should be recovered, one integral has to be evaluated. 
Usually, one needs to resort to numerical integration routines to evaluate these integrals, which become computationally intensive in already moderate dimensions, since they require an exponential growth of function evaluations to keep the error stable across different dimensions \cite[Chapter IX.2]{asmussenglynn2007}. 
After recovering the density or distribution function on a grid, one has to extend these objects to globally valid densities or distribution functions. Assuming that this non-trivial task can be achieved, simulation from a given density or distribution function without further information on the underlying distribution can only be considered rather simple in the univariate case. For example, \cite{devroye1986,devroye1981,abatewhitt1992,barabesi2015} consider the univariate case and essentially apply Fourier inversion to obtain the density and a dominating density of the corresponding random variable to apply acceptance-rejection simulation techniques. Similarly, \cite{glassermanliu2010,chenfenglin2012} use Fourier inversion techniques to obtain the distribution function and density corresponding to a univariate characteristic function along with error guarantees on Monte Carlo estimates from the approximated distribution function. An extension of these approaches to the multivariate setting seems challenging and the literature on this problem is scarce. Often, the authors seem to focus on the bivariate case and on inverting the related, but numerically more convenient, Laplace transform, not the characteristic function, see e.g.\ \cite{choudrylucantoniwhitt1994,abatechoudrywhitt1998,abatewhitt2006,cappellowalker2018}. Further, neither of these works provides a corresponding simulation algorithm, which leaves this practically relevant aspect unanswered.

Instead of ``inverting'' the characteristic function to an object suitable for simulation, this paper proposes a novel machine learning based simulation algorithm that directly incorporates the given characteristic function into its loss function. The algorithm is inspired by generative machine learning models. Such models usually take an input dataset and try to generate samples that are as close as possible to the input data, where ``closeness'' is measured in terms of a certain ``distance criterion''.
In contrast to these scenarios, we are not provided with an input dataset that we would like to imitate, but we would like to learn a specific target distribution that is solely available in terms of its characteristic function. To achieve this, we use a generative neural network, whose loss function is based on a specific representation of the Maximum-Mean-Discrepancy metric (MMD) derived from a translation invariant kernel \cite{sriperumbudur2010hilbert}. The advantage of this choice of loss function is that it allows to exploit a representation of the MMD as a weighted $L_2$ distance of characteristic functions to directly incorporate the characteristic function into the loss function of the model, without the need for samples from the targeted distribution. Furthermore, an evaluation of the loss function only requires the ability to evaluate the given characteristic function at every argument, which allows to consider characteristic functions which are only accessible in a ``black-box'' format. The contribution of this paper can be summarized as follows.\\

\textbf{Contribution:} We provide a universal machine learning based method for simulation from a characteristic function, which is only assumed to be given in a black-box format. We solely require the ability to evaluate the characteristic function at every argument. Moreover, the suggested method is independent of the dimension of the underlying random vector, which makes multivariate applications as easy as univariate applications.\\

Potential applications of our framework can be found in the realm of Lévy processes, i.e.\ cádlág processes with independent and stationary increments, which are ubiquitous in applied sciences, see e.g.\ the monographs by \cite{sato, conttankov2003financial,applebaum2009} for an overview about the topic and its applications. Each Lévy process can be identified with an infinitely divisible distribution, which corresponds to the distribution of the increments of the process over a fixed time horizon. The famous Lévy-Khintchine representation of an infinitely divisible random vector provides that its characteristic function can be described in terms of a triplet that corresponds to a deterministic part, a Gaussian part and a Poissonian part. The non-trivial component of the characteristic function of an infinitely divisible distribution correspond to the Poissonian part, which is usually described in terms of an infinite measure called the Lévy measure. Most properties of a Lévy process are governed by the Lévy measure, which is therefore usually the object which is modeled in applications. However, one of the major practical challenges is to simulate from the corresponding infinitely divisible random vector, since usually the only accessible quantity describing its distribution is its characteristic function. Exact simulation schemes are rarely known and one usually needs to rely on approximate simulation schemes, which however suffer from the curse of dimensionality as they become quite slow in higher dimensions. Here, our simulation algorithm might provide a viable alternative, since after training the generative neural network, generating samples from the underlying distribution is as fast as evaluating a neural network.

The paper is organized as follows. Section \ref{secconstrgen} describes the construction of a generative neural network that generates samples from a given characteristic function. Section \ref{sectheorguarantees} provides theoretical guarantees on the approximation quality of such a generative neural network in terms of the MMD metric. Section \ref{secsimstud} illustrates the algorithm in a simulation study and Section \ref{secconclusion} summarizes the results and sketches open research questions. Proofs can be found in Appendix \ref{appproofs}.

\section{Construction of the generator}
\label{secconstrgen}

Assume that we are given the characteristic function of a probability measure $P$ on $\R^d$, i.e.\
\begin{align*}
\Phi_P: \R^d \to\C ;\ \bmz\mapsto \Phi_P(\bmz)=\e\lk e^{i\bmz^\intercal \bmX} \rk =\int_{\R^d} e^{i\bmz^\intercal \bmx} P(\rmd\bmx),
\end{align*}
Recall that we assume that $\Phi_P(\cdot)$ is the only information about $\bmX \sim P$ that we have access to. The goal of this section is to construct a generative neural network, called generator, that can (approximately) create samples from the probability measure $P$.

The general idea can be described as follows: Choose your favorite neural network architecture $N_\bmtheta: \R^{d^\prime} \to \R^d$, where $\bmtheta\in\Theta:=\R^{p}$ denotes the parameter vector of a neural network with $p$ parameters. Feed the neural network $N_\bmtheta(\cdot)$ i.i.d.\ random vectors $(\bmZ_i)_{\leqn}$ from a distribution $P_{\bmZ}$ which can be easily sampled. Take the output vectors $\lc \bmY_i\rc_{\leqn}$ of the neural network and ``compare`` how close they are to the target distribution $P$ via a suitable loss function, where we use the notation 
$$\bmY_i:=\bmY_i(\bmtheta) :=N_\bmtheta(\bmZ_i).$$ 
Update the neural network parameters $\bmtheta$ by taking a stochastic gradient descent step towards the minimum of the loss function. Iterate this procedure until convergence of $\bmtheta$ to the minimizer $\bmtheta^\star$ of the loss function is achieved. The resulting neural network $N_{\bmtheta^\star}$ should then approximately satisfy $N_{\bmtheta^\star}(\bmZ)\sim P$. 

This procedure is very well known in Machine Learning under the term generative modeling. The key difference here is that we are not given a sample dataset $(\bmX_i)_{i\in\N}$ from $P$ which can be used during training of the model. Instead, we target the theoretical representation of $P$ in terms of $\Phi_P$ directly in the loss function of our model. 

\subsection{MMD metrics based on translation invariant kernels}
Our choice for the loss function is based on specific representatives of the MMD metrics \cite{sriperumbudur2010hilbert}, namely those which are constructed from a translation invariant kernel. Before introducing our loss function explicitly, let us recall these special representatives of the $\mmd$ metrics. In general, every symmetric positive definite function $k:\R^d\times \R^d\to\R$, called kernel, defines a semi-metric on the space of probability measures on $\R^d$ by 
$$\mmd_k(P_1,P_2):=\lc \e\lk k(\bmX,\bmX^\prime) \rk -\e\lk 
k(\bmX,\bmY^\prime) \rk-\e\lk  k(\bmX^\prime,\bmY) \rk + \e\lk k(\bmY,\bmY^\prime ) \rk\rc^{1/2},$$
where the random vectors $\bmX,\bmX^\prime\sim P_1$ and $\bmY,\bmY^\prime\sim P_2$ are mutually independent. If $k(\bmx,\bmy)=\psi_k(\bmx-\bmy)$ for some continuous and positive definite function $\psi_k:\R^d \to\R$, $k$ is called translation invariant kernel. Moreover, when $\psi_k(\bm 0)=1$, Bochner's theorem implies that $k$ has the representation 
$$k(\bmx ,\bm y)=\e\lk \exp\lc i(\bmx-\bmy)^\intercal \bmW \rc\rk ,$$
where $\bmW$ denotes a unique symmetric random vector with values in $\R^d$. 
Exploiting this representation of $k$, \cite[Corollary 4]{sriperumbudur2010hilbert} shows that $\mmd_k(P_1,P_2)$ can be expressed as 
\begin{align}
     \mmd_k(P_1,P_2)=\lc  \e \lk \vert \Phi_{P_1}(\bmW)-\Phi_{P_2}(\bmW)\vert^2 \rk \rc^{1/2} ,\label{mmdascharfctdiff}
\end{align}
where $\vert z\vert $ denotes the modulus of a complex number $z$.
Therefore, when $k$ is a translation invariant kernel with $k(\bm 0,\bm 0)=1$, $\mmd_k(P_1,P_2)$ can be interpreted as a the expected distance of the characteristic functions $\Phi_{P_1}$ and $\Phi_{P_2}$ at the random location $\bmW$. Further, if the support\footnote{The smallest closed set $A$ s.t.\ $\p(\bmW\in A)=1$.} of $\bmW$ is equal to $\R^d$, \cite[Theorem 9]{sriperumbudur2010hilbert} shows that $\mmd_k(P_1,P_2)$ defines a proper metric on the space of probability measures on $\R^d$. Thus, from now on, we will make the following assumption, which is satisfied for most of the commonly used translation invariant kernels.

\begin{ass}
\label{ass1}
    The kernel $k$ satisfies $k(\bmx,\bmy)=\e\lk \exp\lc i\bmW^\intercal(\bmx- \bmy)\rc\rk$ for some random vector $\bmW$ with support $\R^d$.
\end{ass}

\begin{rem}
It is rather easy to find a kernel $k$ for which $\bmW$ is known and has support $\R^d$. For example, when $k(\bmx,\bmy)=\exp\lc \Vert \bmx-\bmy\Vert^2_2/\sigma\rc$, the random vector $\bmW$ has independent components which follow a Gaussian distribution with mean $0$ and variance $2/\sigma$. When $k(\bmx,\bmy)=\exp\lc \Vert \bmx-\bmy\Vert_1/\sigma\rc$, the random vector $\bmW$ has independent components which follow a Cauchy distribution with location parameter $0$ and scale parameter $1/\sigma$. In both cases, $\bmW$ has support $\R^d$. The kernels are called the Gaussian and Laplace kernel with bandwidth parameter $\sigma$, respectively.
\end{rem}

\subsection{Construction of the loss function}

We would like to construct a loss function which measures the distance of the distribution $P_\bmtheta$ of $N_\bmtheta(\bmZ)$ and our target probability distribution $P$, which is solely represented in terms of $\Phi_P$. However, $P_\bmtheta$ is usually inaccessible and therefore we have to resort to empirical approximations of $P_\bmtheta$. Here, this is done in terms of sampling i.i.d.\ observations $(\bmY_i)_{\leqn}=(N_\bmtheta(\bmZ_i))_{\leqn}$ from $P_\bmtheta$ and approximating $P_\bmtheta$ by its empirical measure $P_{\bmtheta,n}:=n^{-1}\sumn \delta_{\bmY_i}$. By virtue of (\ref{mmdascharfctdiff}), this allows to estimate $\mmd_k(P_\bmtheta,P)$ via
$$ \mmd_k(P_{\bmtheta,n},P) =\lc \e_{\bmW}\lk \Big\vert n^{-1} \sum_{i=1}^n \exp(i\bmW^\intercal \bmY_i) -\Phi_P(\bmW) \Big\vert^2 \rk\rc^{1/2}.$$  
Simple calculations show that
\begin{align*}
    \mmd_k(P_{\bmtheta,n},P)^2&=\frac{1}{n^2}\e_{\bmW}\lk  \sum_{i,j=1}^n \exp(i\bmW^\intercal (\bmY_i-\bmY_j)) \rk -\frac{2}{n} \e_{\bmW}\lk \sum_{i=1}^n \exp(-i\bmW^\intercal \bmY_i)\Phi_P(\bmW)\rk\\
    &+C_P,
\end{align*}
where $C_P:=\e_{\bmW}\lk \Phi_P(\bmW)\overline{\Phi_P(\bmW)} \rk$ is a constant that solely depends on $P$. Note that $\e_{\bmW}\lk \sum_{i=1}^n \exp(-i\bmW^\intercal \bmY_i)\Phi_P(\bmW)\rk$ is real-valued, since it can be expressed as $n\e_{\bmY\sim P_{\bmtheta,n}, \bmX\sim P} $ $\lk k(\bmX,\bmY)\rk$.
Usually, it is not realistic to assume that $\e_{\bmW}\Big[ \sum_{i=1}^n \exp(-i\bmW^\intercal \bmY_i)\Phi_P(\bmW)\Big] $ can be computed in closed form. Thus, we further approximate $\mmd_k(P_\bmtheta,P)$ using the approximations
\begin{align}
    \e_{\bmW}\lk \sum_{i=1}^n \exp(-i\bmW^\intercal \bmY_i)\Phi_P(\bmW)\rk \approx  \frac{1}{m} \sumn \sum_{l=1}^m \Re\lc \exp(-i\bmW_l^\intercal \bmY_i)\Phi_P(\bmW_l) \rc, \label{approxcrossterm}
\end{align}
and
\begin{align} \frac{1}{n^2}\e_{\bmW}\Big[ \sum_{i,j=1}^n  \exp(i\bmW^\intercal (\bmY_i-\bmY_j)) \Big] \approx \frac{1}{mn(n-1)} \sumijn \sum_{l=1}^m  \exp(i\bmW_l^\intercal (\bmY_i-\bmY_j))  ,\label{approxgeneratorterm}
\end{align}
where $\lc \bmW_l\rc_{1\leq l\leq m}$ denote i.i.d.\ copies of $\bmW$ and $\Re(z)$ denotes the real part of a complex number $z$. Note that we do not take into account the observations with indices $i=j$ in (\ref{approxgeneratorterm}), since they would introduce a bias in the estimate of $\mmd_k(P_\bmtheta,P)$. Further, it is necessary to only consider the real part of the right hand side of (\ref{approxcrossterm}), since, even though $\e_{\bmW}\lk \sum_{i=1}^n \exp(-i\bmW^\intercal \bmY_i)\Phi_P(\bmW)\rk$ is real-valued, $m^{-1}\sum_{l=1}^m \sumn \exp(-i\bmW_l^\intercal \bmY_i)\Phi_P(\bmW_l)$ does not need to be real-valued anymore. Taking the real part in (\ref{approxgeneratorterm}) is not necessary, since every term is summed with its complex conjugate, which gives a real-valued approximation.

Combining the approximations above, this allows to define our loss function for the training of $N_\bmtheta(\cdot)$ as
\begin{align}
L(\bmtheta)&:=L \lc \lc \bmY_i\rc_{1\leq i\leq n},\lc \bmW_l\rc_{1\leq l\leq m},\Phi_P \rc& \nonumber\\
&:=\frac{1}{mn(n-1)}\sumijn \sum_{l=1}^m    \exp(i\bmW_l^\intercal (\bmY_i-\bmY_j)) \nonumber \\
&-2\Re \lc\frac{1}{nm}\sum_{l=1}^m \sumn \exp(-i\bmW_l^\intercal Y_i)\Phi_P(\bmW_l)\rc \label{deflossfunction},
\end{align}
which is an approximation of the unknown quantity $\mmd_k(P_\bmtheta,P)^2-C_P$. Obviously, a $\bmtheta$ which minimizes $L\lc\lc \bmY_i\rc_{1\leq i\leq n},\lc \bmW_l\rc_{1\leq l\leq m},\Phi_P\rc$ is an approximation of $\argmin_{\bmtheta\in\Theta}\mmd_k$ $(P_\bmtheta,P)$. The quality of the approximation of the loss function is discussed in further detail in the next section.

Based on the loss function in (\ref{deflossfunction}), we may construct a generator of the probability distribution $P$ as follows: Choose $\bmZ\sim P_{\bmZ}$, $k$ (resp.\ $\bmW$) and a neural network $\lc N_\bmtheta(\cdot)\rc_{\bmtheta\in\Theta}$. Define the loss function $L(\bmtheta)$ as in (\ref{deflossfunction}) and find $\bmtheta^\star\in\argmin_{\bmtheta\in\Theta}L(\bmtheta)$. Then, $N_{\bmtheta^\star}(\bmZ)$ should approximately be distributed according to $P$. The pseudo-code of the algorithm is schematically summarized in Algorithm \ref{alg1}.

\SetKwInput{KWParam}{Requires}
\begin{algorithm}
\SetAlgoLined \LinesNumbered
\KwIn{Characteristic function $\Phi_P$ of target probability distribution $P$}
\KWParam{Kernel $k$, generator of $\bmW$/$\bmZ$, number of epochs $e$, and batch sizes $n,m\in\N$}
Initialize a neural network $N_\bmtheta:\R^{d^\prime}\to\R^d$ \;
\For{$1\leq k\leq e$}{
    Simulate i.i.d.\ random vectors $\lc\bmW_l\rc_{1\leq l\leq m}$ and $\lc\bmZ_i\rc_{1\leq i\leq n}$\;
    Compute $\lc\bmY_i\rc_{\leqn}=\lc N_\bmtheta(\bmZ_i)\rc_{\leqn}$\;
    Calculate $L(\bmtheta)=L\lc \lc\bmY_i\rc_{\leqn},\lc\bmW_l\rc_{1\leq l\leq m},\Phi_P\rc$\;
    Update $\bmtheta$ by taking a gradient step towards the minimizer of $L(\bmtheta)$. 
}
 \KwResult{Generator $N_\bmtheta(\cdot)$ of the probability distribution $P$.}
 \caption{Learning the generator of the probability distribution $P$ which is solely parameterized in terms of $\Phi_P$} 
 \label{alg1}
\end{algorithm}

\begin{rem}
    An alternative loss function could be defined via 
    \begin{align}
       \int_{\R^d} \vert \hat{\Phi}_n(\bmz)-\Phi_P(\bmz)\vert^2 w(\bmz)\rmd \bmz, \label{empfcharfctloss} 
    \end{align}\
    where $\hat{\Phi}_n(\bmz)=n^{-1}\sumn \exp(i\bmz^\intercal\bmY_i)$ denotes the empirical characteristic function of the sample $ \lc\bmY_i\rc_{1\leq i\leq n}$ and $w:\R^d\to [0,\infty)$ denotes a non-negative Lebesgue-integrable ``weighting'' function.
    A loss function of the form (\ref{empfcharfctloss}) has been used for estimation purposes and goodness-of-fit testing, see \cite{yu2004,meintanis2016} for reviews of the topic. It has also been used in generative machine learning to learn the distribution of a given dataset \cite{ansari2020characteristic,liyu2020,lili2023}. However, it has not been used for simulation purposes yet. A simple calculation shows that a simulation approach based on (\ref{deflossfunction}) is essentially equivalent to a simulation approach based on a discretized version of (\ref{empfcharfctloss}) when we assume that $\bmW$ has a density w.r.t.\ the Lebesgue measure or, equivalently, that $w(\cdot)$ integrates to finite value. 
\end{rem}

\begin{rem}
In the definition of our loss function (\ref{deflossfunction}) we approximate $\e_{\bmW} \big[ 
\exp( $ $ i\bmW^\intercal (\bmY_i-\bmY_j)) \big]$ according to (\ref{approxgeneratorterm}). When the kernel $k$ corresponding to $\bmW$ is known, this is not necessary, since we know that $\e_{\bmW} \lk 
\exp\lc i\bmW^\intercal (\bmY_i-\bmY_j)\rc \rk=k(\bmY_i,\bmY_j)$. However, even in these cases, we refrain from using the exact expression, since, due to the equivalence of our loss function with a discretized version of (\ref{empfcharfctloss}), one can easily see that the loss function in (\ref{deflossfunction}) defines a pseudo-metric for every sample $\lc \bmW_l\rc_{1\leq l\leq m}$ (up to a multiplication of one term with $(n-1)/n$). This property would be lost if we used the exact mathematical expression instead. Thus, to essentially work with a pseudo-metric in every step of the algorithm, we  use the approximation (\ref{approxgeneratorterm}) in our loss function, even when the corresponding kernel is known.
\end{rem}

\begin{rem}
\label{remconstant}
   Since the loss function $L\lc \lc \bmY_i\rc_{1\leq i\leq n},\lc \bmW_l\rc_{1\leq l\leq m},\Phi_P\rc$ is an approximation of $\mmd_k(P_\bmtheta,P)^2-C_P$, its realizations cannot be interpreted as a ``large'' or ``small'' loss in absolute terms. They may only be compared relatively to each other. However, $C_P=\e_{\bmW}\lk \Phi_P(\bmW)\overline{\Phi_P(\bmW)} \rk$ can be approximated by $m^{-1}\sum_{1\leq l\leq m}\Phi_P(\bmW_l)\Phi_P(-\bmW_l)=:\hat{C}_P$. Thus, to estimate $\mmd_k(P_\bmtheta,P)^2$ directly, one can use $L( \lc \bmY_i\rc_{1\leq i\leq n},\lc \bmW_l\rc_{1\leq l\leq m},\Phi_P )+\hat{C}_P$, which then allows to interpret the loss of network as ``large'' or ``small''. The term $\hat{C}_P$ could obviously be included in every calculation of the loss, but to speed up the computation of the loss we have refrained from doing so.
\end{rem}

\section{Theoretical guarantees}
\label{sectheorguarantees}
In general, providing a formal recipe of a neural network architecture that is reasonable for a problem at hand and comes with theoretical guarantees is known to be a difficult task. It is still an active area of research to provide theoretical guarantees for the approximation quality of neural networks with random input and general loss functions, see e.g.\ \cite{baileytelgarsky2018,lee2017,lulu2020}. In this work, the goal is to choose a suitable architecture of the neural network which is compatible with the loss function in (\ref{deflossfunction}), in the sense that it allows to approximately generate samples of an arbitrary probability distribution $P$.

Two approaches might be considered to obtain results about the approximation quality. First, one could try to quantify the quality of the approximation of a function $G:\R^{d^\prime}\to\R^d$ which satisfies $G(\bmZ)\sim P$. When $P_{\bmZ}$ is absolutely continuous w.r.t.\ the Lebesgue measure, the existence of such a function $G$ is ensured by Rosenblatt's transform \cite{rosenblatt1952}. Since the Rosenblatt transform is known to not be unique, it is clear that there are many functions $G$ which satisfy $G(\bmZ)\sim P$. Thus, it could be the case that $N_\bmtheta$ and a chosen $G$ are very different, even though $N_\bmtheta(\bmZ)$ is still a good approximation of a generator of $P$. Furthermore, it happens frequently that neither representative of $G$ obeys any regularity conditions such as continuity or differentiability and $G\in L_1(P_{\bmZ})$ if and only if $\e_{\bmX\sim P}\lk \vert X\vert \rk$ is finite. Thus, an analysis based on a specific representation of $G$ seems to be difficult.

On the other hand, a neater approach is to directly target the approximation in terms of the distance of the distribution of $N_\bmtheta(\bmZ)$ and $P$. In general, one would hope for an approximation result in terms of a metric that metrizes weak convergence of probability measures on $\R^d$. In our framework, this distance is naturally given by the $\mmd_k$ metric and an analysis of the approximation capabilities in terms of the $\mmd_k$ metric of a feedforward neural network with ReLu activation function has been recently conducted in \cite{yangliwang2022}. Their results can be summarized as follows: Consider a bounded, translation invariant and characteristic kernel $\vert k\vert\leq 1$ on $\R^d$. Then, if $P_{\bmZ}$ is an absolutely continuous probability distribution w.r.t.\ the Lebesgue measure,  there exists a fully connected feedforward neural network $N_{\bmtheta^\star}:\ \R^{d^\prime}\to\R^d$ with ReLu activation function, depth $h\geq 2$ and widths $(w_i)_{1\leq i\leq h}\geq 7d+1$ such that $\mmd_k(P_{\bmtheta^\star},P)\leq 160 \sqrt{d} \lc\max_{1\leq i\leq h} w_i \rc^{-1} h^{-1/2} $.

Since we assume that our kernel $k$ is bounded, translation invariant and continuous, \cite[Theorem 3.2]{sriperuimbudur2016} implies that the considered $\mmd_k$ metrics metrize weak convergence, i.e.\ $\mmd_k(P_n,P)\to 0$ if and only if $\lc P_n\rc_{\iinn}$ weakly converges to $P$. 
An immediate corollary is that there exists a sequence of neural networks whose distribution weakly converges to the target distribution.
\begin{cor}
\label{corapproxmmd}
    Let $\bmX$ denote an arbitrary random vector. Assume that $P_{\bmZ}$ is an absolutely continuous probability distribution w.r.t.\ the Lebesgue measure. Then there exists a sequence of fully connected neural networks $N_{\bmtheta_n}(\cdot)$ of depth $2$ with ReLu activation function such that $ N_{\bmtheta_n}(\bmZ)\to \bmX$ in distribution.
\end{cor}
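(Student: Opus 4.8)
The plan is to chain the two results quoted just before the statement: the quantitative $\mmd_k$-approximation bound of \cite{yangliwang2022} and the metrization of weak convergence by $\mmd_k$ from \cite[Theorem 3.2]{sriperuimbudur2016}. Both results require a bounded, translation invariant, continuous and characteristic kernel, so the first thing I would do is fix one such kernel once and for all, independently of $\bmX$. A convenient choice is the Gaussian kernel $k(\bmx,\bmy)=\exp(-\Vert\bmx-\bmy\Vert_2^2/\sigma)$ with an arbitrary fixed bandwidth $\sigma>0$: it is translation invariant, continuous and satisfies $\vert k\vert\leq 1$, and by the Remark following Assumption \ref{ass1} its associated random vector $\bmW$ has support $\R^d$, so $k$ is characteristic by \cite[Theorem 9]{sriperumbudur2010hilbert}. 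Writing $P$ for the law of the given random vector $\bmX$, this single kernel satisfies the hypotheses of both cited theorems no matter what $P$ is.

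Next I would convert the width parameter appearing in the bound of \cite{yangliwang2022} into a sequence index. Fixing the depth at $h=2$, which is admissible because the quoted result allows any $h\geq 2$, I would for each $n\in\N$ choose the common width $w_n:=7d+n\geq 7d+1$ and invoke \cite{yangliwang2022} to produce a fully connected depth-$2$ ReLu network $N_{\bmtheta_n}:\R^{d^\prime}\to\R^d$ whose law $P_{\bmtheta_n}$ obeys
\begin{align*}
\mmd_k(P_{\bmtheta_n},P)\leq 160\sqrt{d}\,w_n^{-1}\,2^{-1/2}=\frac{160\sqrt{d}}{\sqrt{2}\,(7d+n)}.
\end{align*}
Since $d$ is fixed, the right-hand side tends to $0$ as $n\to\infty$, hence $\mmd_k(P_{\bmtheta_n},P)\to 0$.

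Finally, because the chosen $k$ is bounded, translation invariant and continuous, \cite[Theorem 3.2]{sriperuimbudur2016} applies and yields that $\mmd_k(P_{\bmtheta_n},P)\to 0$ is equivalent to the weak convergence $P_{\bmtheta_n}\to P$. By construction $P_{\bmtheta_n}$ is the law of $N_{\bmtheta_n}(\bmZ)$ and $P$ is the law of $\bmX$, so this is precisely the assertion $N_{\bmtheta_n}(\bmZ)\to\bmX$ in distribution, completing the argument.

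I do not expect a genuinely hard step, since the corollary is a direct concatenation of two imported theorems; the only points demanding care are of a bookkeeping nature. Concretely, one must exhibit a single kernel that simultaneously meets every hypothesis of both results — boundedness, $\vert k\vert\leq 1$ and continuity for the weak-convergence characterization, together with the characteristic/support-$\R^d$ property needed for $\mmd_k$ to be a metric and for the approximation bound to be meaningful — and one must pick the width sequence so that it both respects the lower bound $w_n\geq 7d+1$ and diverges. I would also note explicitly that no moment or regularity assumption on $\bmX$ enters, since both the bound of \cite{yangliwang2022} and the metrization result hold for an arbitrary probability measure $P$ on $\R^d$.
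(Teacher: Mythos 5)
Your argument is correct and is exactly the route the paper takes: the corollary is presented as an immediate consequence of chaining the quantitative $\mmd_k$ bound of \cite{yangliwang2022} (with depth $h=2$ and widths sent to infinity) with the fact that $\mmd_k$ metrizes weak convergence for a bounded, continuous, translation invariant, characteristic kernel. Your additional bookkeeping — fixing a single admissible kernel such as the Gaussian and choosing the diverging width sequence $w_n=7d+n$ — just makes explicit what the paper leaves implicit.
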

Thus, the right architecture of a neural network allows to build a generator which is ``close'' to the target distribution not only in the $\mmd_k$ metric, but also in terms of the topology of weak convergence of probability measures. 

\cite{yangliwang2022} in combination with Corollary \ref{corapproxmmd} explains how to choose the hyperparameters of a fully connected feedforward neural network with ReLu activation function such that there exists a specific parametrization of the network, which is a good generator for $P$. However, to this point, it is not clear that our approximation of the loss function in (\ref{deflossfunction}) allows to find such an approximation and whether the impact of the additional hyperparameters $n$ and $m$ can be quantified.

In the following, we will show that it is possible to obtain explicit bounds (in probability) on the approximation quality of a neural network with the loss function in (\ref{deflossfunction}). To this purpose, we need to introduce some notation. Let us denote 
\begin{align*}
    \bmtheta_{n,m}&\in\argmin_{\bmtheta\in\Theta} L\lc (\bmY_i)_{1\leq i\leq n},(\bmW_l)_{1\leq l\leq m},\Phi_P)\rc. 
\end{align*}
To ensure that $\theta_{n,m}$ is well-defined we need to make an additional technical assumption, similar to \cite[Assumption 1]{briol2019statistical}.

\begin{ass}
\label{assexmin}
    \begin{enumerate}
   Let $k_{m}(\bmx,\bmy):=\frac{1}{2m}\lc \sum_{l=1}^m \exp(i \bmW_l(\bmy-\bmx) + \exp(-i \bmW_l(\bmy-\bmx) \rc$ denote a random translation invariant and bounded kernel. Conditionally on almost every realization of $\lc \bmW_l\rc_{1\leq l\leq m}$ we assume
        \item For every probability measure $P$ there exists a $c>0$ such that $ \{ \bmtheta\in\Theta\mid \mmd_{k_m}(P_\bmtheta,P)$ $\leq \inf_{\bmtheta\in\Theta}\mmd_{k_m}(P_\bmtheta,P) +c\}$ and $ \{ \bmtheta\in\Theta\mid \mmd_{k}(P_\bmtheta,P)\leq \inf_{\bmtheta\in\Theta}\mmd_{k}(P_\bmtheta,P) +c\}$ are bounded.
        \item For every $n\in\N$ and probability measure $P$ there exists a $(c_n)_{n\in\N}>0$ such that $ \{ \bmtheta\in\Theta\mid \mmd_{k_m}(P_{\bmtheta,n},P)\leq \inf_{\bmtheta\in\Theta}\mmd_{k_m}(P_{\bmtheta},P) +c_n \}$ is bounded almost surely.
    \end{enumerate}
\end{ass}
Further, let $\Vert f\Vert_{P^{\bmW}_m}:=\sqrt{ \frac{1}{m}\sum_{l=1}^m f(\bmW_l)^2}$ and denote $\mathcal{F}_\Theta:=\{ f_\bmtheta \mid \bmtheta\in\Theta\}$, where  
\begin{align*}
          f_\theta:\R^d\to [0,\infty);\   f_\bmtheta(\bm w)&:=\vert \Phi_P(w)-\Phi_{P_\bmtheta}(w) \vert^2.
\end{align*} 
Finally, denote the empirical covering number w.r.t.\ $\Vert \cdot\Vert_{P^{\bmW}_m}$ of a class of functions $\mathcal{F}$ acting on $\bmW$ as $N\lc \epsilon,\mathcal{F},\Vert \cdot\Vert_{P^{\bmW}_m} \rc$. Essentially, covering numbers quantify the complexity of a class of functions and are commonly used in machine learning to express the complexity of a class of models. For example, for the well-known VC-classes of functions the empirical covering numbers can be bounded by a polynomial function of $\epsilon$ which is independent of $m$. For more details on covering numbers we refer to \cite{wellner2013weak}. 

With the notation at hand, we are ready to state explicit bounds on the approximation quality of a fully connected feedforward neural network that only depends on hyperparameters that can be chosen by the user.
\begin{thm}
\label{thmtheoreticalguarantees}
     Assume that $k$ satisfies Assumptions \ref{ass1} and \ref{assexmin}, $P_{\bmZ}$ is an absolutely continuous probability distribution w.r.t.\ the Lebesgue measure, $N_\bmtheta(\cdot)$ has ReLu activation function, $h\geq 2$, $(w_i)_{1\leq i\leq h}\geq 7d+1$ and assume that $\e\lk \int_0^{4} \sqrt{ \log \lc  N\lc \epsilon,\mathcal{F}_\Theta, \Vert\cdot\Vert_{P_m^{\bmW}} \rc\rc  } \rmd \epsilon  \rk\leq f(m)$ for some function $f$ which satisfies $f(m) m^{-1/2}\to 0$. Then for every $\tau>0$ we have
        \begin{align*}
            \mmd_k(P_{\bmtheta_{n,m}},P)&\leq  160 \sqrt{d} \lc\max_{1\leq i\leq h} w_i \rc^{-1} h^{-1/2}  + 2\sqrt{\frac{24f(m)+8\lc 1+\sqrt{8\log(2/\tau)}\rc }{\sqrt{m}}} \\
        &\ \ +  2 \sqrt{\frac{2}{n}}\lc 2+\log\lc\frac{2}{\tau}\rc \rc.
        \end{align*}
        with probability at least $1-2\tau$.
\end{thm}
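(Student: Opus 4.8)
\section*{Proof proposal}

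The plan is to prove a standard oracle-type inequality organized around the random pseudo-metric $\mmd_{k_m}$, isolating three independent sources of error: the network approximation error (the first summand), the error from replacing the population kernel $k$ by its random approximation $k_m$ (the second summand), and the error from replacing the population law $P_\bmtheta$ by its empirical counterpart $P_{\bmtheta,n}$ (the third summand). Throughout I would work conditionally on a realization of $\lc\bmW_l\rc_{1\leq l\leq m}$, so that $k_m$ is a fixed bounded kernel with unit diagonal $k_m(\bmx,\bmx)=1$ and $\mmd_{k_m}$ is a genuine pseudo-metric, and I let $\bmtheta^\star$ denote the parameter of the network from \cite{yangliwang2022} with $\mmd_k(P_{\bmtheta^\star},P)\leq 160\sqrt{d}\lc\max_{1\leq i\leq h} w_i\rc^{-1}h^{-1/2}$. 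The existence and boundedness of the minimizer $\bmtheta_{n,m}$ is guaranteed by Assumption \ref{assexmin}, and the remark following Algorithm \ref{alg1} justifies treating $L(\bmtheta)+\hat{C}_P$ (up to the harmless $(n-1)/n$ factor) as the squared empirical pseudo-metric $\mmd_{k_m}(P_{\bmtheta,n},P)^2$, so that $\bmtheta_{n,m}$ minimizes $\bmtheta\mapsto\mmd_{k_m}(P_{\bmtheta,n},P)$.

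First I would set up the decomposition. Since $\mmd_{k_m}$ is a pseudo-metric and $\bmtheta_{n,m}$ is a minimizer, two triangle inequalities together with optimality give
\begin{align*}
\mmd_{k_m}(P_{\bmtheta_{n,m}},P)&\leq\mmd_{k_m}(P_{\bmtheta_{n,m}},P_{\bmtheta_{n,m},n})+\mmd_{k_m}(P_{\bmtheta_{n,m},n},P)\\
&\leq\mmd_{k_m}(P_{\bmtheta^\star},P)+2\sup_{\bmtheta\in\Theta}\mmd_{k_m}(P_\bmtheta,P_{\bmtheta,n}).
\end{align*}
Passing from $\mmd_{k_m}$ back to $\mmd_k$ on the left-hand side and in the term $\mmd_{k_m}(P_{\bmtheta^\star},P)$, using $\vert\sqrt{a}-\sqrt{b}\vert\leq\sqrt{\vert a-b\vert}$ together with the identities $\mmd_k(P_\bmtheta,P)^2=\e_{\bmW}\lk f_\bmtheta(\bmW)\rk$ and $\mmd_{k_m}(P_\bmtheta,P)^2=m^{-1}\sum_{l=1}^m f_\bmtheta(\bmW_l)$, produces the two conversions that account for the outer factor $2$ in the second summand and yields
\begin{align*}
\mmd_k(P_{\bmtheta_{n,m}},P)\leq\mmd_k(P_{\bmtheta^\star},P)+2\sqrt{\sup_{\bmtheta\in\Theta}\Big\vert\e_{\bmW}\lk f_\bmtheta(\bmW)\rk-\tfrac{1}{m}\sum_{l=1}^m f_\bmtheta(\bmW_l)\Big\vert}+2\sup_{\bmtheta\in\Theta}\mmd_{k_m}(P_\bmtheta,P_{\bmtheta,n}).
\end{align*}
The three summands are then bounded separately and a union bound over the two probabilistic events yields probability at least $1-2\tau$.

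The second summand is a uniform law of large numbers for the $\bmW$-average over the class $\mathcal{F}_\Theta$. Since $0\leq f_\bmtheta\leq 4$, a bounded-differences (McDiarmid) argument controls the fluctuation of the supremum about its mean and produces the $\sqrt{8\log(2/\tau)}$ contribution, while symmetrization followed by Dudley's entropy integral bounds the expected supremum by a multiple of $m^{-1/2}\,\e\lk\int_0^{4}\sqrt{\log N\lc\epsilon,\mathcal{F}_\Theta,\Vert\cdot\Vert_{P_m^{\bmW}}\rc}\,\rmd\epsilon\rk\leq f(m)m^{-1/2}$, which accounts for the $24 f(m)$ term; tracking the constants gives the numerator $24f(m)+8\lc 1+\sqrt{8\log(2/\tau)}\rc$, and the outer square root (from $\vert\sqrt a-\sqrt b\vert\leq\sqrt{\vert a-b\vert}$) yields exactly the stated term, which vanishes because $f(m)m^{-1/2}\to 0$. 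For the first summand one simply invokes \cite{yangliwang2022}. The third summand I would address by noting that $k_m$ is bounded with unit diagonal, so $\mmd_{k_m}(P_\bmtheta,P_{\bmtheta,n})=\Vert\mu_{P_\bmtheta}-\mu_{P_{\bmtheta,n}}\Vert$ is the norm of an empirical-mean deviation in the $2m$-dimensional reproducing kernel Hilbert space of $k_m$ whose feature map has norm one; concentration of this deviation gives the $1/\sqrt{n}$ rate with the $\lc 2+\log(2/\tau)\rc$ deviation factor.

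The main obstacle is the third summand. Because $\bmtheta_{n,m}$ is selected using the very draws $\lc\bmZ_i\rc$ that determine the empirical measure $P_{\bmtheta_{n,m},n}$, one cannot treat $\lc N_{\bmtheta_{n,m}}(\bmZ_i)\rc$ as an i.i.d.\ sample from $P_{\bmtheta_{n,m}}$, and a pointwise (fixed-$\bmtheta$) concentration bound does not suffice; the supremum over $\bmtheta$ is genuinely needed. Obtaining the clean, dimension- and complexity-free $1/\sqrt{n}$ rate of the theorem — rather than a bound inflated by the metric entropy of the generator class $\lc N_\bmtheta\rc_{\bmtheta\in\Theta}$ — is the delicate point, and it is exactly here that the explicit finite-dimensional structure of the RKHS of $k_m$ and the uniform bound $k_m(\bmx,\bmx)=1$ (which makes $\e_{\bmZ}\Vert\mu_{P_\bmtheta}-\mu_{P_{\bmtheta,n}}\Vert^2\leq 1/n$ hold for \emph{every} $\bmtheta$) must be exploited, the boundedness of the relevant parameter set from Assumption \ref{assexmin} being used to make the supremum manageable. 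A secondary, purely technical nuisance is the careful bookkeeping of the constants through symmetrization, the entropy integral, and the bounded-differences inequality in the second summand.
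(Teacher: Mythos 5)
Your overall architecture --- three error sources, the conversion between $\mmd_k$ and $\mmd_{k_m}$ via $\vert\sqrt a-\sqrt b\vert\leq\sqrt{\vert a-b\vert}$ accounting for the outer factor $2$, and the symmetrization/Dudley/bounded-differences treatment of the $m$-term over $\mathcal{F}_\Theta$ --- matches the paper's proof of the second and third parts of Lemma \ref{lemlossfctisrandommmd} closely, so the first two summands are in order. The genuine gap is the $n$-term, and you have correctly located it yourself. Your decomposition reduces it to the \emph{uniform} quantity $\sup_{\bmtheta\in\Theta}\mmd_{k_m}(P_\bmtheta,P_{\bmtheta,n})$, where all the $P_{\bmtheta,n}$ are built from the same latent draws $(\bmZ_i)$. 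The pointwise bound $\e_{\bmZ}\Vert\mu_{P_\bmtheta}-\mu_{P_{\bmtheta,n}}\Vert^2\leq C/n$ does not upgrade to a supremum over an unrestricted parameter set $\Theta=\R^p$ at the clean $1/\sqrt n$ rate: a sufficiently rich network can, for the given $(\bmZ_i)$, collapse the $n$ outputs while keeping $P_\bmtheta$ spread out, making $\mmd_{k_m}(P_\bmtheta,P_{\bmtheta,n})$ of order one. Any honest control of this supremum would reintroduce the metric entropy of the generator class, which is exactly the term the theorem does not contain. So the step you flag as ``the delicate point'' is not a bookkeeping nuisance; as set up, it fails.

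The paper avoids this by never forming a uniform-in-$\bmtheta$ empirical deviation. It shows (part 1 of Lemma \ref{lemlossfctisrandommmd}) that $L$ equals $\hmmd_{k_m}(P_{\bmtheta,n},P)-C_P$, so $\bmtheta_{n,m}$ is exactly the minimum-MMD estimator of \cite{briol2019statistical} for the deterministic kernel $k_m$ (conditionally on $(\bmW_l)_{1\leq l\leq m}$), except that the empirical data measure $P_T$ is replaced by $P$ itself. It then realizes $\bmtheta_{n,m}$ as a limit of the estimators $\hat{\bmtheta}_{k_m,n,T_h}$ as the fictitious sample size $T_h\to\infty$ along a subsequence --- this is where Assumption \ref{assexmin} and a Glivenko--Cantelli argument on a compact neighbourhood $A^\beta$ of the minimizing set are used --- and invokes the generalization bound of \cite[Theorem 1]{briol2019statistical}, whose $\sqrt{2/T}$ contribution vanishes in the limit, leaving exactly $2\sqrt{2/n}\lc 2+\log(2/\tau)\rc$ at the specific point $\bmtheta_{n,m}$ relative to $\inf_{\bmtheta}\mmd_{k_m}(P_\bmtheta,P)$. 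If you want to keep your self-contained route, you would have to either restrict the supremum to the bounded sublevel sets of Assumption \ref{assexmin} and accept an entropy-dependent constant, or reproduce the argument of \cite{briol2019statistical}; as written, the third summand is unproved. A secondary point to tidy up: the minimized quantity $\hmmd_{k_m}(P_{\bmtheta,n},P)$ is the diagonal-removed U-statistic, which can be negative and is not the square of a pseudo-metric, so your two triangle inequalities cannot be applied to it verbatim; the $O(1/n)$ discrepancy with the V-statistic must be carried explicitly rather than dismissed as a harmless factor.
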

Thus, if we assume that the complexity of $\lc P_\bmtheta\rc_{\bmtheta\in\Theta}$ is not too large and that our optimization procedure is able to find $\bmtheta_{n,m}$, we can choose parameters $n,m,h,(w_i)_{1\leq i\leq h}$ such that $P_{\bmtheta_{n,m}}$ is arbitrarily close to $P$ in terms of the $\mmd_k$ metric with high probability. In other words, it is reasonable to assume that a small empirical loss in (\ref{deflossfunction}) means that the law of $N_{\bmtheta_{n,m}}(\bmZ)$ is close to $P$ in the $\mmd_k$ metric. Moreover, as the $\mmd_k$ metric metrizes weak convergence, we can also assume that the law of $N_{\bmtheta_{n,m}}(\bmZ)$ is close to $P$ in the topology of weak convergence of probability measures.

\section{Simulation results}
\label{secsimstud}

This section illustrates our proposed simulation algorithm for a random vector corresponding to a given characteristic function. The purpose is to illustrate that the method works in a rather universal manner, which is why we have chosen the same hyperparameters for all experiments that we conducted. The code that was used to train the models and generate the plots can be found on \url{https://github.com/Flo771994/charfctgen/}.

According to the results of Section \ref{sectheorguarantees}, we chose a standard feedforward neural network architecture with ReLU activation function and two hidden layers. The input layer has width $2d$, the first hidden layer has width $300$, the second hidden layer has width $50$ and the activation function for the last layer is chosen as a linear mapping to a $d$-dimensional output vector. The input random vector $\bmZ$ was chosen to have independent standard normal distributed margins. For all our experiments we used a convex combination of Gaussian kernels with bandwidths $b\in\{ 0.02,0.5,1,5,100\}=:B$, which leads to the corresponding random variable 
$$ \bmW\sim  \sqrt{2/\eta}\tilde{\bmW},$$ 
where $\eta$ is uniformly distributed on $B$ and $\tilde{\bmW}$ follows a multivariate normal distribution with independent standard normal distributed margins. The very small and large bandwidths $0.02$ and $100$ were necessary to prevent the neural network from learning a large constant in some cases. This can occur since $\mmd_k(c,P)$ is approximately constant for large $c$, which then mistakenly can be considered as a local minimum of the loss function.
The batch sizes $n,m$ have been fixed to $6000$ as well as the number of epochs $e$. As a little modification to Algorithm \ref{alg1}, we have kept the sample $(\bmW_l)_{1\leq l\leq m}$ constant across $20$ epochs to avoid resampling in every iteration. To update the parameters by a gradient descent step, we chose to use the automatic differentiation routines of PyTorch \cite{pytorch} in combination with the ADAM optimizer introduced in \cite{kingmaba2015}. We started with an initial learning rate $0.01$ and adapted the learning rate after $2000$ and $4000$ epochs by multiplying the current learning rate with $0.1$.

We have conducted two experiments. As a first experiment, we chose a Gaussian mixture model with $J$ mixture components, where each of the $J$ multivariate Gaussian distributions is chosen with equal probability. The corresponding characteristic function is given by
$$ \Phi^{(1)}_{\bar{\mu},\bar{\Sigma}}(\bmz)=\frac{1}{J}\sum_{j=1}^J\exp\lc i\mu_j^\intercal\bmz -\bmz^\intercal \Sigma_j\bmz/2 \rc,$$
where $\bar{\mu}_k=(\mu_1,\ldots \mu_J) \in\R^{d\times J}$ and $\bar{\Sigma}=(\Sigma_1,\ldots,\Sigma_J)\in \R^{(d\times d)\times J}$
This example is particularly interesting, because the resulting distribution is multimodal, which is usually considered as a challenging property to learn in machine learning. Further, the density and distribution function are still tractable, which allows for comparison with the ground-truth and exact simulation schemes. We generated the random vectors corresponding to the characteristic function $\Phi^{(1)}_{\bar{\mu},\bar{\Sigma}}$ in dimensions $d=2,5,10$ and for $J=1,2,5,10$ mixture components. The location parameters $\bar{\mu}$ and dependence parameters $\bar{\Sigma}$ were generated randomly using sklearn's \textit{spd\_matrix} function in Python. 

We report plots of the estimated marginal densities for a sample from the generator. Further, we report bivariate contour plots of the estimated bivariate densities of the last two margins of the generator and from the exact simulation algorithms that are available in the numpy package in Python. The estimates were obtained using the \textit{gaussian\_kde} function from the Scipy package in Python. We have additionally plotted the true marginal densities and contour levels of the underlying distributions. The number of samples used to generate these plots was set to $10^6$, where we have fixed the contour levels to $\{0.0005,0.001,0.0025,0.005,0.01,0.05,0.1\}$ across all examples. Figure \ref{figgausmix} contains representative plots for each of the experiments. Additional plots can be found in the corresponding GitHub repository and in Appendix \ref{appaddplots}. 

\begin{figure}
\centering
    \subfloat[Gaussian mixture distribution $2$-dim with $2$ mixture components]{
    \includegraphics[scale=0.45]{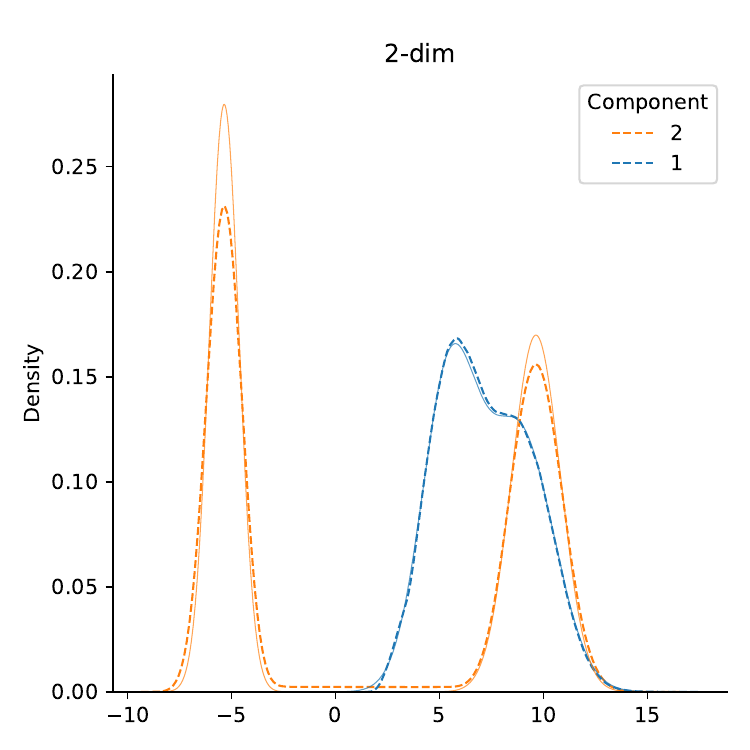}
    \includegraphics[scale=0.45]{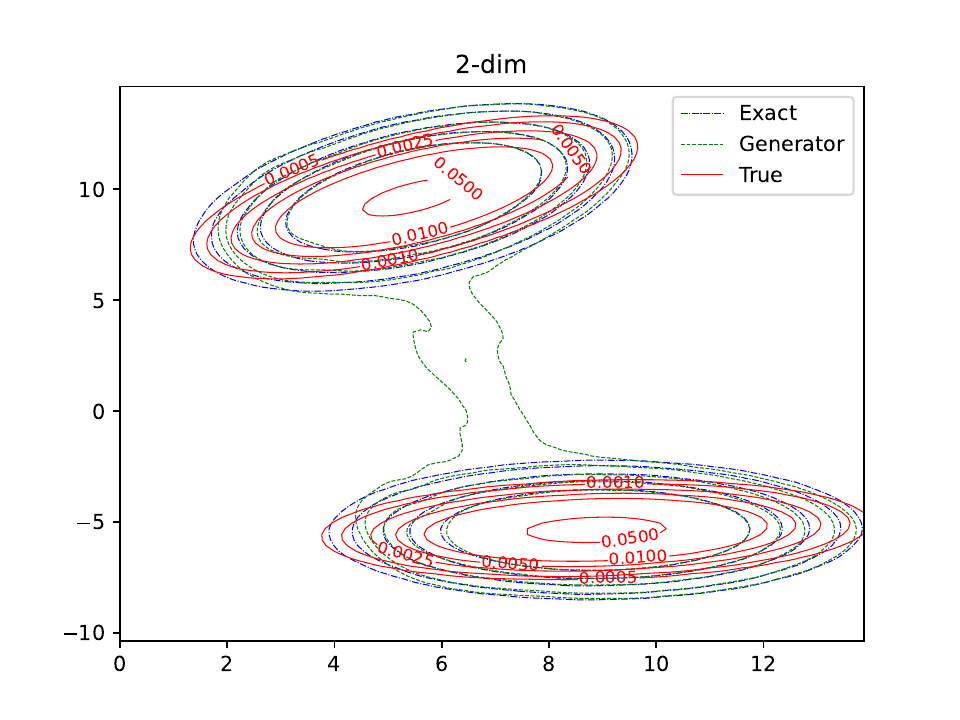}  
    }\vfill  
    \subfloat[Gaussian mixture distribution $5$-dim with $5$ mixture components]{
        \includegraphics[scale=0.45]{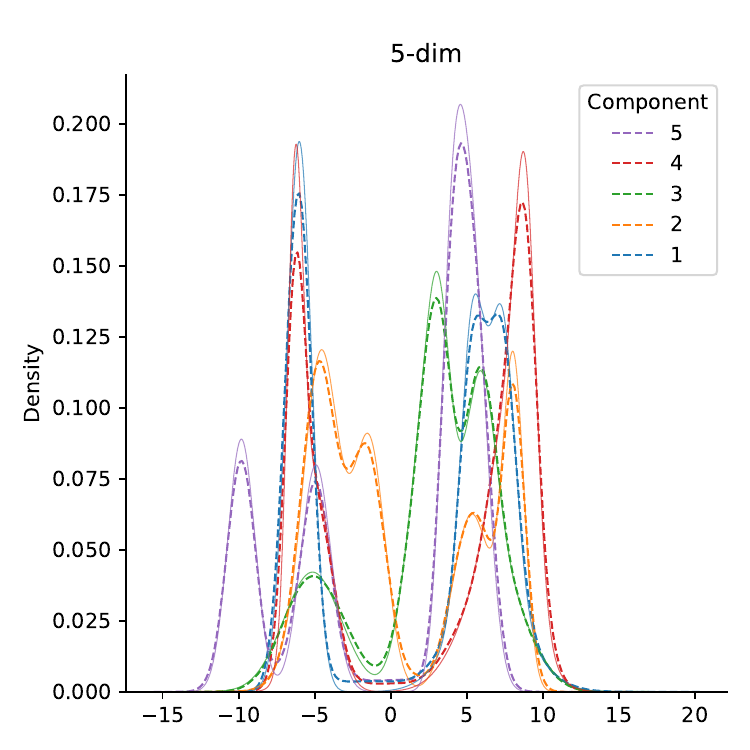}
    \includegraphics[scale=0.45]{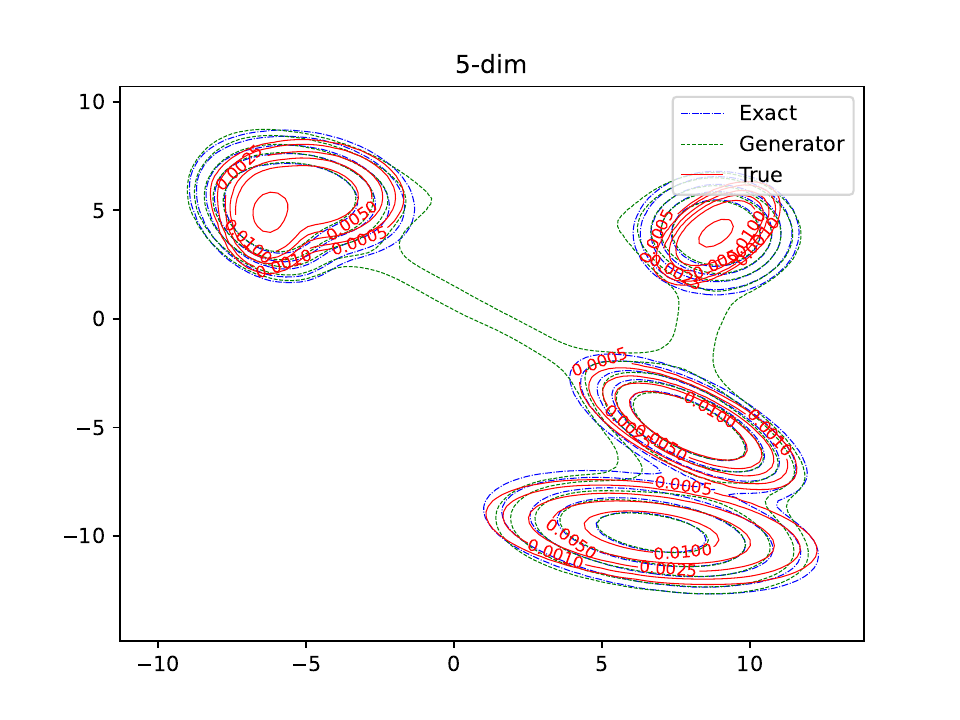}  
    }\vfill  
     \subfloat[Gaussian mixture distribution $10$-dim with $10$ mixture components]{
        \includegraphics[scale=0.45]{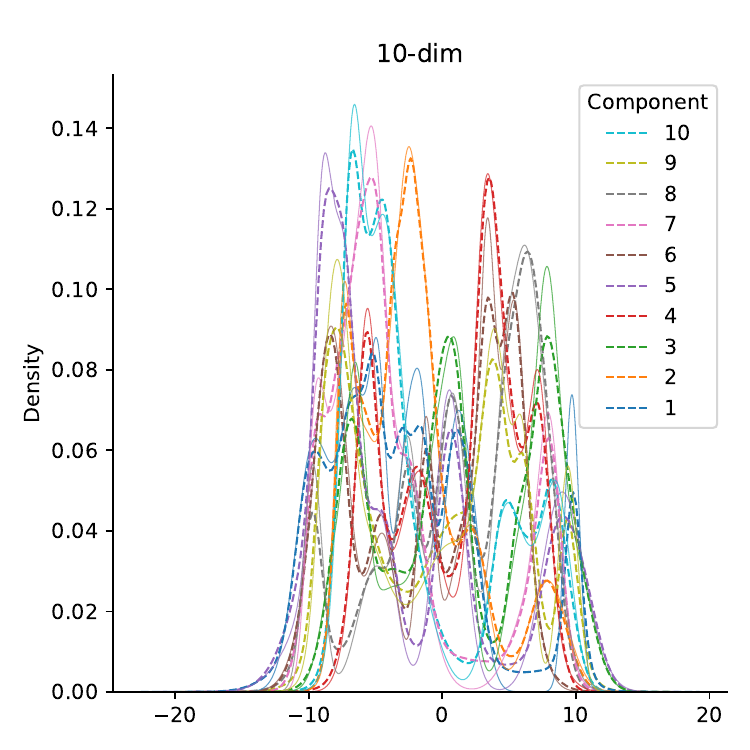}
    \includegraphics[scale=0.45]{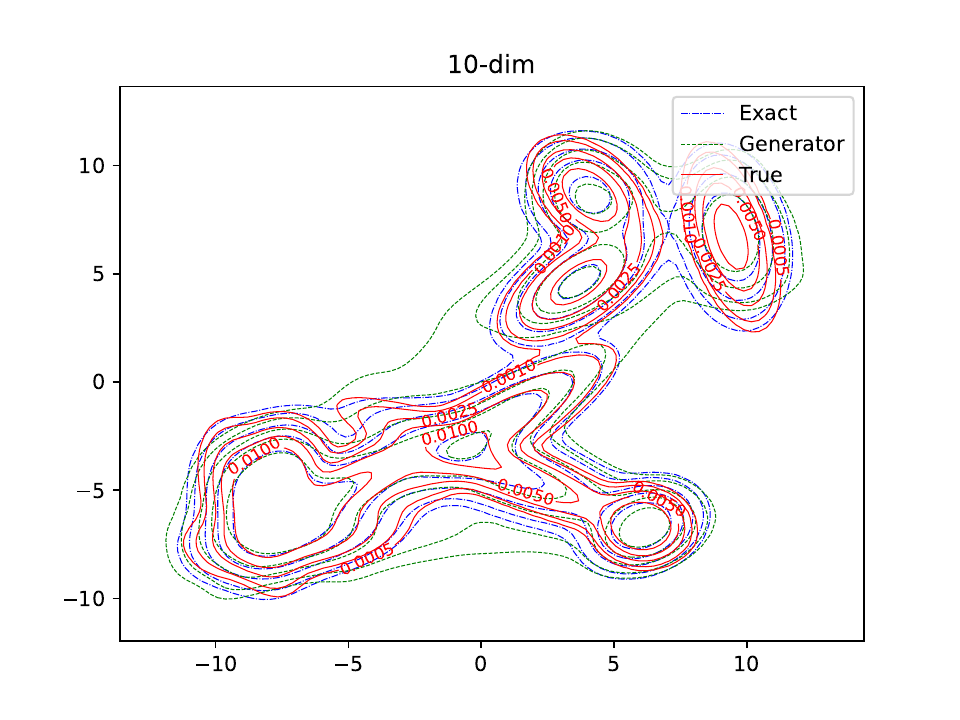}  
    }\vfill 
    \caption{Estimated marginal (left) and bivariate (right) densities of the Gaussian mixture distribution in dimensions $2,5,10$ with $2,5,10$ mixture components. The bivariate densities correspond to the last two components of the corresponding random vector. The solid red lines show the true densities and the dashed green and dash-dotted blue lines correspond to estimated densities from the generator and the exact simulation algorithm.}
\label{figgausmix}
\end{figure}

A first remarkable observation is that the simulation algorithm is able to find all the univariate and bivariate modes of the Gaussian mixture distribution, independently of the dimension and number of mixture components. The marginal and bivariate contour plots suggest that the modes are not as strictly separated as one would expect from an exact sample of the target distribution, even though the bivariate contour plots of the exact samples also do not perfectly resemble the theoretical contours\footnote{ This indicates that the number of simulated random vector is too low, as the estimated densities from the exact simulation algorithm should recover the true contour lines for a large enough sample size. Increasing the number of simulated random vectors from $10^6$ to $10^7$ only slightly improved the results, but further increasing the number of simulated random vectors was not possible under the constraint of conducting the computations in a reasonable amount of time on the available hardware.}. A comparison of Figure \ref{figgausmix} with Figure \ref{figgaussadd} in the Appendix shows that the performance seems to slightly decrease with an increasing number of mixture components. This is expected, since the complexity of the learned distributions increases with an increasing number of mixture components. On the other hand, the performance seems to be less dependent on the dimension of the learned distribution. As a summary of this example one can draw the conclusion that the simulation algorithm successfully learns the main features of the targeted distributions, but, unsurprisingly, an exact simulation algorithm is to be preferred when it is available.

As a second experiment, we chose to simulate from a characteristic function of an $\alpha$-stable distribution, which is one of the most popular infinitely divisible distributions used to define an associated Lévy process. Such models exhibit heavy tails as the corresponding random vectors only have moments up to order $\alpha$. The corresponding characteristic function is given by
\begin{align}
\Phi^{(2)}_{\alpha,\Lambda,\bm\tau}(\bmz)=\exp\lc i\bmz^\intercal \bm \tau -\int_{S_{d-1}} f_\alpha(\bmz,\bm u) \Lambda(\rmd \bm u)  \rc,  \label{eqncharfctstabledist}
\end{align}
where $\bm\tau\in\R^d$ is a shift parameter,
$$ f_\alpha(\bmz,\bm u)=\begin{cases}
   \vert \bmz^\intercal \bm u \vert^{\alpha}\lc 1-i\tan\lc 
\frac{\pi\alpha}{2} \sign \lc \bmz^\intercal \bm u \rc \rc \rc & \alpha\not=1;\\
\vert \bm z^\intercal \bm u \vert+i\frac{2}{\pi} \lc\bm z^\intercal \bm u \rc \log\lc \vert \bm z^\intercal \bm u\vert \rc & \alpha=1 
\end{cases} $$
and $\Lambda$ is a finite measure on the unit sphere $S_{d-1}$ of the Euclidean norm on $\R^d$. In the univariate case, exact simulation schemes are available, even though closed form expression of the density and distribution function are not known. As pointed out in \cite{nolan2014}, simulation of a multivariate $\alpha$-stable random vector is challenging and essentially only a small subclass of $\alpha$-stable distributions may be exactly simulated in dimension $d\geq 2$. Thus, in practice, one usually resorts to an approximate simulation algorithm based on truncating the Lévy measure of the $\alpha$-stable distribution, which we will use as a benchmark. Details concerning the approximate simulation algorithm can found in \cite[Section 6.3]{conttankov2003financial}, but everything that is relevant to our simulation study is that the algorithm has a tuning parameter $\epsilon$ which governs the trade-off between run-time and accuracy of the algorithm. Our simulation algorithm may provide an alternative to the standard approximate simulation algorithm of $\alpha$-stable distributions, since the characteristic function is available in closed form. However, exact evaluation of the integral in (\ref{eqncharfctstabledist}) is difficult, which is why we w.l.o.g.\ assume that the total mass of $\Lambda$ is equal to $1$, since a scaling of $\Lambda$ can be translated into a scaling and translation of the corresponding random vector. This assumption allows to approximate $\int_{S_d} f_\alpha(\bmz,\bm u) \Lambda(\rmd \bm u)$ via a Monte-Carlo approximation by drawing samples from $\Lambda$ and setting $\int_{S_d} f_\alpha(\bmz,\bm u) \Lambda(\rmd \bm u)\approx M^{-1}\sum_{j=1}^M f_\alpha(\bmz,\bm u_j)$, where $M$ denotes the number of samples from $\Lambda$. In this example, we will assume that $\Lambda$ is the law of $\bm Z/\Vert \bm Z\Vert_2$, where $\bmZ$ follows a $d$-dimensional Gaussian distribution with mean $\mu$ and covariance matrix $\Sigma$. For our experiments we have chosen $\alpha \in \{1/2,1\} $, $\mu=\bm 0$ and set $\tau=\bm 1$. Further, we chose $M=6000$ and $\Sigma\in\{ \text{id}_d,\text{id}_d+\id_d/2\}$, where $\text{id}_d$ denotes the $d$-dimensional identity matrix and $\id_d$ denotes a $d$-dimensional matrix containing $1$ in every entry. To benchmark out algorithm we chose $\epsilon=0.1$, which roughly corresponds to a $10-100$ times slower runtime of the approximate simulation algorithm in comparison to sampling from the generator. A comparison of the two algorithms with similar runtime was not possible, since in this case $\epsilon$ would have to be chosen so large that the approximate simulation algorithm would often output the same fixed constant.

Due to the heavy tails of the simulated distributions, density estimation cannot be used as standard measure of comparison, since the kernel density estimates are heavily deformed by the (correct) outliers in the data. Therefore, we have chosen to compare the performance of the simulation algorithm in terms of estimated quantiles of projections of the form $\langle \bm u,\bm Y \rangle$, where $\bm u$ was fixed to the first $d$-components of the vectors $\bm u_1:=\bm 1$, $\bm u_2:=(1,-1,1,-1,1,-1,1,-1,1,-1)$, $\bm u_3:=(-1,2,-1,2,-1,2,-1,2,-1,2)$. We have opted for such a comparison since projections of $\alpha$-stable random vectors depend on the underlying dependence structure and it is known that they follow a univariate $\alpha$-stable distribution \cite[Chapter 1]{samorodnitskytaqqu1994} whose parameters may be derived from the measure $\Lambda$. We have chosen the quantiles $q=0.05,0.1,0.3,0.5,0.7,0.9,0.95$ across all examples. The following two tables exemplarily show the resulting estimated quantiles, whereas a full list of plots and tables of all the conducted experiments can be found in the corresponding GitHub repository. Similarly to the first experiment, the estimates are based on $10^6$ samples from the generator and the approximate simulation algorithm, respectively.

\begin{table}
\footnotesize
    \begin{tabular}{llllllrrr}
    \hline
           & Quantile    & 0.05   & 0.1    & 0.3   & 0.5   &   0.7 &   0.9 &   0.95 \\
    \hline
           & Generator   & -50.22 & -12.54 & 1.27  & 1.97  &  2.48 &  9.08 &  22.21 \\
     $\bm u_1$ & True        & -45.24 & -8.51  & 1.41  & 2.00  &  2.59 & 12.46 &  49.04 \\
           & Approximate & -44.70 & -8.58  & 1.41  & 2.00  &  2.58 & 12.54 &  49.24 \\
     \hline
           & Generator   & -31.10 & -10.06 & -0.68 & -0.04 &  0.52 &  9.33 &  29.36 \\
     $\bm u_2$ & True        & -47.06 & -10.46 & -0.59 & 0.00  &  0.59 & 10.51 &  47.27 \\
           & Approximate & -47.11 & -10.50 & -0.59 & 0.00  &  0.58 & 10.5  &  47.25 \\
     \hline
           & Generator   & -55.90 & -16.85 & 0.08  & 1.04  &  1.97 & 14.79 &  42.55 \\
     $\bm u_3$ & True        & -73.53 & -15.57 & 0.07  & 1.00  &  1.94 & 17.61 &  75.68 \\
           & Approximate & -73.46 & -15.62 & 0.06  & 1.00  &  1.92 & 17.58 &  75.59 \\
    \hline
    \end{tabular}
    \caption{Estimated quantiles of the projection of $\langle \bm u_i,\bmY\rangle$ for $\alpha=1/2$, $d=2$ and $\Sigma=\text{id}_d$.  }
    \label{tab05stab2dim}
\end{table}

\begin{table}
\footnotesize
    \begin{tabular}{llllllrrr}
    \hline
           & Quantile    & 0.05    & 0.1     & 0.3   & 0.5   &   0.7 &   0.9 &   0.95 \\
    \hline
           & Generator   & -427.47 & -211.99 & 6.85  & 9.71  & 10.27 & 13.69 &  18.57 \\
     $\bm u_1$ & True        & -68.69  & -7.50   & 9.01  & 10.00 & 10.98 & 27.45 &  88.48 \\
           & Approximate & -68.49  & -7.48   & 9.03  & 10.00 & 10.97 & 27.44 &  88.83 \\
     \hline
           & Generator   & -79.24  & -39.10  & -1.18 & -0.10 &  0.16 &  1.88 &   4.31 \\
     $\bm u_2$ & True        & -34.27  & -7.62   & -0.43 & 0.00  &  0.43 &  7.68 &  34.54 \\
           & Approximate & -34.22  & -7.67   & -0.42 & 0.00  &  0.42 &  7.66 &  34.63 \\
     \hline
           & Generator   & -104.76 & -49.69  & 3.07  & 4.83  &  5.33 &  9.04 &  14.41 \\
     $\bm u_3$ & True        & -60.02  & -9.46   & 4.18  & 5.00  &  5.81 & 19.41 &  69.81 \\
           & Approximate & -60.13  & -9.52   & 4.20  & 5.00  &  5.80  & 19.41 &  69.85 \\
    \hline
    \end{tabular}

    \caption{Estimated quantiles of the projection of $\langle \bm u_i,\bmY\rangle$ for $\alpha=1/2$, $d=10$ and $\Sigma=\text{id}_d+\id_d/2$.}
        \label{tab05stab10dim}
\end{table}

\begin{table}
\footnotesize
    \centering
\begin{tabular}{llllllrrr}
\hline
       & Quantile    & 0.05   & 0.1   & 0.3   & 0.5   &   0.7 &   0.9 &   0.95 \\
\hline
       & Generator   & -2.50  & 4.64  & 8.78  & 9.92  & 11.01 & 14.11 &  17.61 \\
 $\bm u_1$ & True        & 0.37   & 5.30  & 8.89  & 10.00 & 11.11 & 14.7  &  19.62 \\
       & Approximate & 0.39   & 5.36  & 8.95  & 10.00 & 11.05 & 14.65 &  19.59 \\
 \hline
       & Generator   & -3.77  & -2.04 & -0.50 & 0.00  &  0.51 &  2.26 &   5.04 \\
 $\bm u_2$ & True        & -4.46  & -2.18 & -0.51 & -0.00 &  0.51 &  2.18 &   4.46 \\
       & Approximate & -4.45  & -2.14 & -0.48 & -0.00 &  0.48 &  2.16 &   4.46 \\
 \hline
       & Generator   & -5.21  & 0.56  & 4.01  & 4.95  &  5.89 &  8.57 &  11.56 \\
 $\bm u_3$ & True        & -3.25  & 0.98  & 4.05  & 5.00  &  5.95 &  9.03 &  13.27 \\
       & Approximate & -3.23  & 1.01  & 4.11  & 5.00  &  5.89 &  8.97 &  13.21 \\
\hline
\end{tabular}
\caption{Estimated quantiles of the projection of $\langle \bm u_i,\bmY\rangle$ for $\alpha=1$, $d=10$ and $\Sigma=\text{id}_d+\id_d/2$.  }
    \label{tab1stab10im}
\end{table}
\normalsize

The first observation is that the bulk of the distribution is fitted reasonably well by the generator across all examples. However, for the very heavy tailed case $\alpha=1/2$, the tails of the distribution are not captured by the generator, whereas the approximate simulation algorithm fits all quantiles well, which is to be expected since, under a very mild condition, the tails of an infinitely divisible random vector are fully captured by the approximate simulation algorithm. Further, when the dimension increases the fit of the tails of the generator deteriorates as can be seen from comparing Tables \ref{tab05stab2dim} and \ref{tab05stab10dim}. This observation can also be theoretically explained, since it is rather easy to see that a ReLu neural network preserves the order of the tails of the input. Since we are using Gaussian random vectors as input for the ReLu neural network one cannot model the very heavy tails of $1/2$-stable distributions. For the less heavy tails of $1$-stable distributions we could observe a much better fit, which is almost competitive with the fit of the approximate simulation algorithm, as reflected in Table \ref{tab1stab10im}. Since, after training, the simulation from the generator is between $10-100$ faster than the approximate simulation algorithm it may even be considered advantageous to use the generator in scenarios where fast sampling from a $1$-stable distribution is required.
Summarizing, one can say that the generator again seems to pick up the main features of the distribution, but due to the false tails of the input random vector it cannot properly capture the heavy tails of the $\alpha$-stable distributions.

A common observation that was noticed during both experiments is that an adequate representation of the corresponding random vectors is only reached when the loss $L(\bmtheta) +\hat{C}_P$ is very small. In our experiments, we needed a loss smaller than $0.01$ to obtain adequate results, which is probably due to the fact that our visual comparison of the distributions in terms of marginal densities and contour plots requires that the generator generates samples that are close to the target distribution in the topology of weak convergence as well. It is reasonable to assume that this is the case when the loss is very small, since we know that our choice of $\mmd_k$ metrizes weak convergence. Moreover, we have estimated the $\mmd_k$ distance of the samples from the generator to the target distribution $P$ as well as to samples from the exact or approximate simulation algorithm, respectively. Usually, the absolute value of the estimated $\mmd_k$ values was smaller than $0.003$, indicating that all underlying distributions are very similar in terms of the $\mmd_k$ metric. We should also mention that, even though the loss seems to stabilize only after a few hundred iterations (see Figure \ref{figloss}), further training after this stabilization improved the results. This is probably due to the fact that every sample of $(\bmW_l)_{1\leq l\leq m}$ specifies a new pseudo-metric which is used for comparison and some meaningful pseudo-metrics to learn the distribution are only sampled occasionally.

\begin{figure}
    \centering
    \includegraphics[scale=0.50]{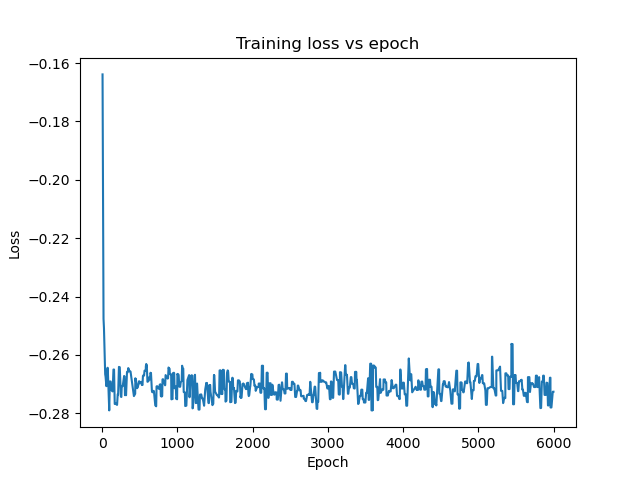}
    \caption{Exemplary visualization of the training loss during $6000$ epochs}
    \label{figloss}
\end{figure}

\section{Conclusion and Discussion}
\label{secconclusion}

\subsection{Conclusion}

We have provided a simulation algorithm for a random vector corresponding to a given characteristic function, which is only accessible in a black-box format. The algorithm comes with statistical guarantees in terms of the $\mmd_k$ metric. To the best of the authors knowledge, the proposed algorithm is the only algorithm for the simulation from a characteristic function than can be applied in every dimension without any assumption on the underlying characteristic function. Furthermore, we have demonstrated that the algorithm seems to work reasonably well in practice. The main observation is that key features of the distributions are reflected in the generated random vectors, even in higher dimensional examples, which suggest that the proposed simulation algorithm is a viable method for the simulation from a given characteristic function. 

\subsection{Discussion}
On the one hand, the universality of the algorithm is an attractive feature. On the other hand, there are many statistical problems where one knows much more about a random vector than its corresponding characteristic function, such as tail-behavior, moments or even that it belongs to a certain family of distributions. In such cases, the proposed simulation algorithm cannot be considered as the optimal solution, since the only adjustable parameter with significant influence on such quantities is the distribution of the input random vector $\bmZ$ and the activation function of the neural network. For example, when it is known that the corresponding random vector does not have a first moment, this feature cannot be modeled by a Neural Network with ReLu activation function and Gaussian random vectors $\bmZ$, since the order of the tails is preserved by any ReLu neural network. This theoretical observation is also confirmed by the simulation study, since the simulated approximations of $\alpha$-stable random vectors yield slimmer tails those of an $\alpha$-stable distributions. To appropriately model the tails of a random vector one would need to allow for activation functions that can adapt to certain tail behavior or use the ReLu activation function in combination with an input random vector that has the correct tail behavior. Unfortunately, replacing the ReLu activation function with a different activation function makes us lose the statistical guarantees and choosing an input random vectors with the correct tail behavior can only be achieved if one assumes certain knowledge of the corresponding random vector. Thus, it is an open research question if a more flexible network architecture can be found that is still universally applicable, comes with statistical guarantees and can adapt to certain stylized features of the target distribution.

Apart from the already mentioned example of Lévy processes, a potential application of our simulation algorithm can be found in Bayesian nonparametrics. Often, a random probability measure (prior) is constructed from a functional of a completely random measure and the posterior distribution is conjugate in the sense that it remains a deterministic transformation of a completely random measure, see e.g.\ \cite{doksum1974,jameslijoipruenster2009,lijoinipoti2014}. Posterior inference is usually conducted on the basis of MCMC methods, since even though the characteristic functional of the completely random measure is tractable, simulating from it is rather difficult. Often, the interest is not directly on the posterior distribution of the completely random measure, but in statistical functionals of the form $I_f=\int_{\R^d} f(x) \mu(\mathrm{d}x)$, where $\mu$ is the posterior completely random measure. Since completely random measures are infinitely divisible, $I_f$ is an infinitely divisible random variable. Moreover, its characteristic function is tractable, see \cite[Lemma 3]{brück2023}. Thus, our suggested simulation algorithm can also be applied to simulate from $I_f$. 

A potential extension of the framework to the simulation of parametric families of characteristic functions is left for future research. In principle, the parameter vector could be fed as an additional input parameter into the generator network. Then, slightly adapting the training procedure and the loss function should allow to learn how to simulate from a parametric family of characteristic functions. Such an extension would be valuable in practice, since it would potentially open the door to simulation of cádlág processes with independent non-stationary increments, called additive processes. They are a natural extension of Lévy processes, but their practical usefulness is often hampered by the complexity of the corresponding simulation algorithms, since each increment of the process corresponds to a different infinitely divisible distribution, which is usually a member of a certain parametric family distributions. However, obtaining statistical guarantees in this framework is challenging, which is why we have not considered this setting in this paper. Another open question is whether it is possible to obtain guarantees on the approximation quality of functionals of the distribution corresponding to a given characteristic function, such as expectations or quantiles. Theorem \ref{thmtheoreticalguarantees} only provides a bound on the $\mmd_k$ metric, which is equivalent to a bound on $\sup_{f\in \mathcal{H}_k} \vert \e_{\bmZ\sim P_{\bmZ}}\lk f(N_{\bmtheta_{n,m}}(\bmZ)) \rk-\e_{\bmX\sim P}\lk f(\bmX)\rk \vert$, where $\mathcal{H}_k$ denotes the unit ball of the reproducing kernel Hilbert space associated to the kernel $k$. However, even though $\mathcal{H}_k$ can be explicitly constructed from a given kernel $k$, it is not clear how useful these bounds are in practice, since many functionals of interest cannot be represented by elements of $\mathcal{H}_k$. A thorough analysis of this problem seems challenging and we leave such an analysis for future research.

\printbibliography

\newpage

\bigskip
\begin{center}
{\large\bf SUPPLEMENTARY MATERIAL}
\end{center}

\section{Proof of Theorem \ref{thmtheoreticalguarantees}}
\label{appproofs}

The proof of Theorem \ref{thmtheoreticalguarantees} is based on the following Lemma, whose proof is deferred to the end of this section.
\begin{lem}
\label{lemlossfctisrandommmd}
For a continuous translation invariant and bounded kernel $k(\bmx,\bmy)=\e[ \exp ( i \bmW^\intercal$ $(\bmx-\bmy) ) ]$ denote
    $$ \hmmd_k(P_{\bmtheta,n},P):=\frac{1}{n(n-1)}\sumijn k(\bmY_i,\bmY_j)-\frac{2}{n}\sumn \e_{\bmX\sim P}\lk k(\bmY_i,\bmX)\rk+\e_{\bmX,\bmX^\prime\sim P}\lk k(\bmX,\bmX^\prime) \rk  $$
    and define a random translation invariant and bounded kernel by $k_{m}(x,y):=\frac{1}{2m}\big( \sum_{l=1}^m $ $\exp(i \bmW_l(\bmy-\bmx) + \exp(-i \bmW_l(\bmy-\bmx) \big)$, where $\lc \bmW_l\rc_{1\leq l\leq m} \overset{i.i.d.}{\sim} \bmW$. Under the same assumptions as for Theorem \ref{thmtheoreticalguarantees} the following is true.
    \begin{enumerate}
        \item $L\lc (\bmY_i)_{1\leq i\leq n},(\bmW_l)_{1\leq l\leq m},\Phi_P)\rc =\hmmd_{k_{m}}(P_{\bmtheta,n},P)-C_P.$
        \item $\mmd_{k_m}(P_{\bmtheta_{n,m}},P) \leq \inf_{\bmtheta\in\Theta}\mmd_{k_m}(P_\bmtheta,P) +2 \sqrt{\frac{2}{n}}\lc 2+\log\lc\frac{2}{\tau}\rc \rc   $
        with probability at least $1-\tau$.
        \item Let $\Vert f\Vert_{P^{\bmW}_m}:=\sqrt{ \frac{1}{m}\sum_{l=1}^m f(\bmW_l)^2}$ and assume that $\e\lk \int_0^{4} \sqrt{ \log \lc  N\lc \epsilon,\mathcal{F}_\Theta,\Vert\cdot\Vert_{P_m^{\bmW}} \rc\rc  } \rmd \epsilon  \rk$ $\leq f(m)$ with $\mathcal{F}_\Theta=\{ f_\bmtheta \mid \bmtheta\in\Theta\}$ and  
        \begin{align*}
            f_\bmtheta(\bm w)&=\frac{1}{2}  \e_{\bmX,\bmX^\prime}\lk  \exp\lc i\bm w^\intercal(\bmX-\bmX^\prime) \rc+\exp\lc -i\bm w^\intercal(\bmX-\bmX^\prime) \rc  \rk\\
            &-\e_{\bmX,\bmY}\lk \exp\lc i\bm w^\intercal(\bmX-\bmY) \rc +\exp\lc -i\bm w^\intercal(\bmX-\bmY) \rc   \rk\\
            &+\frac{1}{2}\e_{\bmY,\bmY^\prime}\lk  \exp\lc i\bm w^\intercal(\bmY-\bmY^\prime) \rc+\exp\lc -i\bm w^\intercal(\bmY-\bmY^\prime) \rc  \rk ,
        \end{align*}
        where $\bmX,\bmX^\prime\sim P$ and $\bmY,\bmY^\prime\sim P_\bmtheta$ are mutually independent.
        Then, 
        $$\sup_{\bmtheta\in\Theta}\vert \mmd^2_k(P_\bmtheta,P)-\mmd^2_{k_m}(P_\bmtheta,P)\vert \leq \frac{24f(m)+8\lc 1+\sqrt{8\log(2/\tau)}\rc }{\sqrt{m}} $$
        with probability at least $1-\tau$.
    \end{enumerate}
    
\end{lem}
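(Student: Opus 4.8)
\textbf{Part 1} is a bookkeeping identity, and I would prove it by substituting the definition $k_m(\bmx,\bmy)=\frac{1}{2m}\sum_{l=1}^m\lc e^{i\bmW_l^\intercal(\bmy-\bmx)}+e^{-i\bmW_l^\intercal(\bmy-\bmx)}\rc$ into each of the three terms of $\hmmd_{k_m}(P_{\bmtheta,n},P)$ and matching them against (\ref{deflossfunction}). In the first term, pairing the summand for $(i,j)$ with the one for $(j,i)$ collapses the two complex exponentials into their real part and reproduces $\frac{1}{mn(n-1)}\sumijn\sum_{l=1}^m e^{i\bmW_l^\intercal(\bmY_i-\bmY_j)}$. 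For the cross term I would use $\e_{\bmX\sim P}[e^{-i\bmW_l^\intercal\bmX}]=\Phi_P(-\bmW_l)=\overline{\Phi_P(\bmW_l)}$, so that $\frac{2}{n}\sumn\e_{\bmX\sim P}[k_m(\bmY_i,\bmX)]$ becomes exactly $2\Re\lc \frac{1}{nm}\sum_{l=1}^m\sumn e^{-i\bmW_l^\intercal\bmY_i}\Phi_P(\bmW_l)\rc$. Finally $\e_{\bmX,\bmX'\sim P}[k_m(\bmX,\bmX')]=\frac1m\sum_{l=1}^m|\Phi_P(\bmW_l)|^2$ is precisely the (conditional) constant $C_P$ that is subtracted. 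No estimates are needed here.

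\textbf{Part 2} is an excess-risk bound for the minimum-MMD estimator, and I would follow the strategy of \cite{briol2019statistical}. Working conditionally on $\lc\bmW_l\rc_{1\le l\le m}$, the kernel $k_m$ is fixed and bounded with $|k_m|\le1$, so its canonical feature map $\phi$ satisfies $\|\phi(\cdot)\|_{\mathcal{H}_{k_m}}\le1$ and $\mmd_{k_m}(Q_1,Q_2)=\|\mu_{Q_1}-\mu_{Q_2}\|_{\mathcal{H}_{k_m}}$ for the mean embeddings. By Part 1, $\bmtheta_{n,m}$ minimizes $\hmmd_{k_m}(P_{\bmtheta,n},P)$, which (up to the harmless $(n-1)/n$ factor noted in the text and the removed diagonal) is the squared distance $\|\hat\mu_{\bmtheta}-\mu_P\|^2$ of the empirical embedding $\hat\mu_\bmtheta=\frac1n\sumn\phi(\bmY_i)$ to the exact target embedding $\mu_P$. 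I would then combine the triangle inequality with optimality of $\bmtheta_{n,m}$: writing $\bmtheta_0$ for a near-minimizer of $\mmd_{k_m}(P_\bmtheta,P)$ (well defined by Assumption \ref{assexmin}), one bounds $\mmd_{k_m}(P_{\bmtheta_{n,m}},P)$ by the achieved empirical value $\mmd_{k_m}(P_{\bmtheta_{n,m},n},P)$ plus embedding deviations, and in turn bounds that empirical value by $\inf_\bmtheta\mmd_{k_m}(P_\bmtheta,P)$ plus the deviation at $\bmtheta_0$. The deviations are controlled by a bounded-differences (McDiarmid) argument: replacing a single input $\bmZ_j$ moves any empirical embedding by at most $2/n$ in $\mathcal{H}_{k_m}$, while $\e\|\hat\mu_\bmtheta-\mu_{P_\bmtheta}\|\le n^{-1/2}$; this yields the $2\sqrt{2/n}\lc2+\log(2/\tau)\rc$ term. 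The hard part, and the step I expect to require the most care, is controlling the deviation at the \emph{data-dependent} minimizer $\bmtheta_{n,m}$ rather than at a fixed parameter; this is exactly where the boundedness of $k_m$ together with the concentration inequality must do the work.

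\textbf{Part 3} is a uniform law-of-large-numbers statement for the class $\mathcal{F}_\Theta=\{f_\bmtheta\}$. The key observation is that, by (\ref{mmdascharfctdiff}), $\mmd_k^2(P_\bmtheta,P)=\e_{\bmW}[f_\bmtheta(\bmW)]$ while $\mmd_{k_m}^2(P_\bmtheta,P)=\frac1m\sum_{l=1}^m f_\bmtheta(\bmW_l)$, since the Bochner measure of $k_m$ is the symmetrized empirical measure of the $\bmW_l$; a short computation shows the symmetrized expression given for $f_\bmtheta$ equals $|\Phi_P(\cdot)-\Phi_{P_\bmtheta}(\cdot)|^2$, which is nonnegative and bounded by $4$. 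Hence $\sup_\bmtheta|\mmd^2_k-\mmd^2_{k_m}|=\sup_{f\in\mathcal{F}_\Theta}|\e_\bmW f-\frac1m\sum_{l=1}^m f(\bmW_l)|$ is the supremum of an empirical process indexed by $\mathcal{F}_\Theta$. I would bound it by the standard route: symmetrization to a Rademacher average, Dudley's entropy integral to pass to the empirical covering numbers $N(\epsilon,\mathcal{F}_\Theta,\Vert\cdot\Vert_{P_m^{\bmW}})$ (whose entropy integral up to the diameter $4$ is assumed $\le f(m)$), which produces the $24f(m)/\sqrt m$ contribution, and McDiarmid for concentration about the mean. Because $|f_\bmtheta|\le4$, changing one $\bmW_l$ perturbs the supremum by at most $4/m$, so McDiarmid yields the deviation $\sqrt{8\log(2/\tau)}/\sqrt m$, and collecting constants gives the stated bound. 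This part is routine empirical-process theory; the only care needed is matching the numerical constants.
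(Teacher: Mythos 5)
Your Parts 1 and 3 follow essentially the same route as the paper: Part 1 is the same direct substitution (including the same implicit identification of $\e_{\bmX,\bmX^\prime\sim P}\lk k_m(\bmX,\bmX^\prime)\rk$ with the ``conditional'' constant, which is really $\hat{C}_P=\frac{1}{m}\sum_{l=1}^m\vert\Phi_P(\bmW_l)\vert^2$ rather than $C_P$), and Part 3 is the same symmetrization--Dudley--McDiarmid chain that the paper packages through the Rademacher-complexity bound of Bartlett and Mendelson; your constants would even come out slightly sharper, since you apply McDiarmid to $f_\bmtheta$ directly instead of to the class rescaled to $[0,1]$ and multiplied back by $8$. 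The genuine difference is in Part 2. You propose to rerun the concentration argument of \cite{briol2019statistical} from scratch, conditionally on $\lc\bmW_l\rc_{1\leq l\leq m}$, with the exact target embedding $\mu_P$ in place of an empirical one, so that only the $n$ generator inputs contribute randomness. The paper instead applies \cite[Theorem 1]{briol2019statistical} as a black box to the auxiliary estimator $\hat{\bmtheta}_{k_m,n,T}$ built from an i.i.d.\ sample of size $T$ from $P$, and then sends $T\to\infty$: it extracts a subsequence $(T_h)$ along which $\hat{\bmtheta}_{k_m,n,T_h}$ converges into $\argmin_{\bmtheta}\hmmd_{k_m}(P_{\bmtheta,n},P)$, using a Glivenko--Cantelli argument on a compact neighbourhood $A^\beta$ of the argmin set (this is where Assumption \ref{assexmin} and the Lipschitz continuity of $\bmtheta\mapsto k_m(N_\bmtheta(\bmZ_i),\cdot)$ are consumed), and passes the probability bound to the limit so that the $\sqrt{2/T}$ term vanishes. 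Your direct route buys a cleaner proof that avoids the subsequence and compactness machinery entirely (Assumption \ref{assexmin} would then only be needed to guarantee existence of the minimizers), at the price of actually re-verifying that the concentration step of Briol et al.\ survives when the empirical target $P_T$ is replaced by $P$ --- in particular the control of the embedding deviation at the data-dependent minimizer $\bmtheta_{n,m}$, which you correctly flag as the delicate step. The paper's route buys the ability to cite that step rather than redo it. Both arrive at the same bound, so I see no gap, only a different division of labour between citation and computation.
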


Using that $\vert\sqrt{x}-\sqrt{y}\vert\leq \sqrt{\vert x-y\vert}$ a consequence of 3.\ in Lemma \ref{lemlossfctisrandommmd} is that with probability at least $1-\tau$ we have
$$ \sup_{\bmtheta\in\Theta}\vert \mmd_{k_m}(P_\bmtheta,P)-\mmd_{k}(P_\bmtheta,P)\vert \leq \sqrt{\frac{24f(m)+8\lc 1+\sqrt{8\log(2/\tau)}\rc }{\sqrt{m}}}. $$

\begin{proof}[Proof of Theorem \ref{thmtheoreticalguarantees}]
    Observe that
    \begin{align*}
        \mmd_k(P_{\bmtheta_{n,m}},P)&= \mmd_k(P_{\bmtheta_{n,m}},P)-\mmd_{k_m}(P_{\bmtheta_{n,m}},P)+\mmd_{k_m}(P_{\bmtheta_{n,m}},P) \\
        &\leq \sup_{\bmtheta\in\Theta} \Big\vert \mmd_k(P_{\bmtheta},P)-\mmd_{k_m}(P_{\bmtheta},P) \Big\vert +\mmd_{k_m}(P_{\bmtheta_{n,m}},P) .
    \end{align*}
    Thus, Lemma \ref{lemlossfctisrandommmd} implies that with probability at least $1-2\tau$ we have
    \begin{align*}
        \mmd_k(P_{\bmtheta_{n,m}},P)&\leq \inf_{\bmtheta\in\Theta} \mmd_{k}(P_{\bmtheta},P) + 2\sqrt{\frac{24f(m)+8\lc 1+\sqrt{8\log(2/\tau)}\rc }{\sqrt{m}}} \\
        &\ \ +  2 \sqrt{\frac{2}{n}}\lc 2+\log\lc\frac{2}{\tau}\rc \rc.
    \end{align*}
    Since $\inf_{\bmtheta\in\Theta}\mmd_k(P_{\bmtheta},P)\leq  160 \sqrt{d} \lc\max_{1\leq i\leq h} w_i \rc^{-1} h^{-1/2}$ by \cite[Theorem 2.8]{yangliwang2022} the claim follows.
\end{proof}

\begin{proof}[Proof of Lemma \ref{lemlossfctisrandommmd}]
Observe that $k_{m}(\bmx,\bmy)=\e_{\Tilde{\bmW}}\lk \exp(i \Tilde{\bmW}(\bmx-\bmy)\rk$, where $\Tilde{\bmW}\sim \frac{1}{2m}\big( \sum_{l=1}^m$ $ \delta_{\bmW_l} +\sum_{l=1}^m \delta_{-\bmW_l} \big)$ denotes a symmetric random variable. Thus, $k(\cdot,\cdot)$ is symmetric, continuous and real-valued and by Bochner's theorem it is also positive definite, i.e.\ it is a kernel. Thus, $\mmd_{k_m}$ is well defined.
\begin{enumerate}
    \item  We obtain 
    \begin{align*}
        &\hmmd_{k_{m}}(P_{\bmtheta,n},P)^2-C_P =\frac{1}{n(n-1)}\sumijn k_m(\bmY_i,\bmY_j)-\frac{2}{n}\sumn \e_{\bmX}\lk k_m(\bmY_i,\bmX) \rk  \\
        &=\frac{1}{n(n-1)}\sumijn \frac{1}{2m} \sum_{l=1}^m  \exp\lc i\bmW_l^\intercal(\bmY_i-\bmY_j) \rc +\exp\lc -i\bmW_l^\intercal(\bmY_i-\bmY_j) \rc  \\
        &-\frac{2}{n}\sumn \e_{\bmX}\lk \frac{1}{2m} \sum_{l=1}^m  \exp\lc i\bmW_l^\intercal(\bmY_i-\bmX) \rc+\exp\lc -i\bmW_l^\intercal(\bmY_i-\bmX) \rc \rk \\
        &=\frac{1}{n(n-1)}\sumijn \frac{1}{2m} \sum_{l=1}^m 2\Re \lc  \exp\lc i\bmW_l^\intercal(\bmY_i-\bmY_j) \rc  \rc  \\
        &-\frac{2}{n}\sumn  \frac{1}{2m} \sum_{l=1}^m \e_{\bmX}\lk 2\Re \lc \exp\lc i\bmW_l^\intercal(\bmY_i-\bmX) \rc \rc \rk  \\
        &= \frac{1}{n(n-1)m}\sumijn  \sum_{l=1}^m   \exp\lc i\bmW_l^\intercal(\bmY_i-\bmY_j) \rc   \\
        &-\frac{2}{nm}\sumn   \sum_{l=1}^m  \Re \lc  \exp\lc i\bmW_l^\intercal\bmY_i \rc  \e_{\bmX}\lk \exp\lc -i\bmW_l^\intercal \bmX) \rc \rk \rc \\
        &=\frac{1}{n(n-1)m}\sumijn  \sum_{l=1}^m   \exp\lc i\bmW_l^\intercal(\bmY_i-\bmY_j) \rc    \\
        &-\frac{2}{nm} \Re \lc\sumn \sum_{l=1}^m      \exp\lc -i\bmW_l^\intercal\bmY_i \rc \Phi_P(\bmW_l) \rc \\
        &=L\lc (\bmY_i)_{1\leq i\leq n},(\bmW_l)_{1\leq l\leq m},\Phi_P)\rc
    \end{align*}
    
    \item 
        First, observe that $\bmtheta_{n,m}\in\argmin_{\bmtheta\in\Theta}\hmmd_{k_m}(P_{\bmtheta,n},P)$ by 1. Further, conditionally on $\lc \bmW_l\rc_{1\leq l\leq m}$, $k_m$ is deterministic and the conditional distribution of $\bmtheta_{n,m}$ given $\lc \bmW_l\rc_{1\leq l\leq m}$ is that of $\argmin_{\bmtheta\in\Theta}\hmmd_{k_m}(P_{\bmtheta,n},P)$, where $k_m$ is deterministic. Next, we remark that \cite{briol2019statistical} define an estimator 
        $$\hat{\bmtheta}_{k,n,T}\in\argmin_{\bmtheta\in\Theta} \hmmd_k(P_{\bmtheta,n},P_T)-C_{P_T}+\frac{1}{T(T-1)}\sum_{\underset{i\not=j}{i,j=1}}^{T}k(\bmX_i,\bmX_j), $$ where $P_T:=T^{-1}\sum_{i=1}^T \delta_{\bmX_i}$ denotes the empirical measure of an i.i.d.\ sample of size $T\in\N$ from $P$ and $k$ is a deterministic kernel. Moreover, \cite[Theorem 1]{briol2019statistical}\footnote{Note that the authors of \cite{briol2019statistical} assume that $k$ is characteristic, but an inspection of their proofs reveals that $k$ does not need to be characteristic to derive the generalization bounds.} provides the following generalization bounds on 
        $\mmd_k(P_{\hat{\bmtheta}_{k,n,T},n},P)$: For every $\tau>0$ one has
        $$  \mmd_k(P_{\hat{\bmtheta}_{k,n,T}},P) \leq \inf_{\bmtheta\in\Theta}\mmd_k(P_\bmtheta,P) +2\lc \sqrt{\frac{2}{n}}+\sqrt{\frac{2}{T}}\rc C(k)\lc 2+\log\lc\frac{2}{\tau}\rc \rc  $$
        with probability at least $1-\tau$, where $C(k):=\sqrt{ \sup_{\bmx\in\R^d}k(\bmx,\bmx)}$.
        
        The key step in the proof is to observe that there exists a subsequence $(T_h)_{h\in\N}$ such that we have $\lim_{h\to\infty}\hat{\bmtheta}_{k_m,n,T_h}\in\argmin_{\bmtheta\in\Theta}\hmmd_{k_m}(P_{\bmtheta,n},P)=:A$ almost surely when we restrict $\hat{\bmtheta}_{k_m,n,T}\in\argmin_{\bmtheta\in\Theta\cap A^\beta} \hmmd_{k_m}(P_{\bmtheta,n},P_T)-C_{P_T}+\frac{1}{T(T-1)}\sum_{\underset{i\not=j}{i,j=1}}^{T} $ $k(\bmX_i,\bmX_j)$ for some $\beta>0$ with $A^\beta:=\{ \bmtheta \in\R^p \mid d(\bmtheta,A)\leq \beta \}$ and $d$ denoting the Euclidean distance. This immediately implies that we also have $\mmd_{k_m}(P_{\bmtheta_{n,m}},P)=\lim_{h\to\infty} \mmd_{k_m}(P_{\hat{\bmtheta}_{k_m,n,T_h}},P)$, since $\bmtheta \mapsto \mmd_{k_m}(P_{\bmtheta},P)$ is continuous.
        
        To see this note that $\hmmd_{k_m} ( P_{\hat{\bmtheta}_{k_m,n,T},n},P_T)\leq \hmmd_{k_m} ( P_{\hat{\bmtheta}_{m,n},n},P_T)$ for all $T$, which implies $\limsup_{T\to\infty}\hmmd_{k_m} ( P_{\hat{\bmtheta}_{k_m,n,T},n},P_T)\leq \lim_{T\to\infty}\hmmd_{k_m} ( P_{\hat{\bmtheta}_{m,n},n},P_T)$ $=\hmmd_{k_m} (P_{\hat{\bmtheta}_{m,n},n},P)$. Now, if $\lim_{h\to\infty}\bmtheta_{k_m,n,T_h}\not\in \argmin_{\bmtheta\in\Theta} \hmmd_{k_m}(P_{\bmtheta,n},P)$ then, with positive probability, there exist an $\epsilon>0$ such that $\limsup_{h\to\infty}\hmmd_{k_m}$ $ ( P_{\hat{\bmtheta}_{k_m,n,T_h},n},P)>\hmmd_{k_m} ( P_{\hat{\bmtheta}_{m,n},n},P)+\epsilon$. However, 
        \begin{align*}
            &\Big\vert \hmmd_{k_m} ( P_{\hat{\bmtheta}_{k_m,n,T},n},P)-\hmmd_{k_m} ( P_{\hat{\bmtheta}_{k_m,n,T},n},P_T) \Big\vert \\
            &=\Big\vert C_P-C_{P_T}+\frac{2}{n}\sumn \lc \frac{1}{T} \sum_{j=1}^T k_m(\bmY_i,\bmX_j)-\e_{\bmX}\lk k_m(\bmY_i,\bmX) \rk \rc\Big\vert.
        \end{align*}
        The first term obviously converges to $0$. The second term converges to $0$ conditionally on $\lc \lc \bmZ_i\rc_{1\leq i\leq n},\lc \bmW_l\rc_{1\leq l\leq m}\rc$ (and thus conditionally on $\lc \bmW_l\rc_{1\leq l\leq m}$), because $n,m$ are fixed and the class of functions $\mathcal{G}:=\cup_{i=1}^n\{ \theta\mapsto k_m(N_\bmtheta(\bmZ_i),\cdot) \mid \bmtheta\in A^\beta \}$ has finite $L_1(P)$-covering number which implies that the Glivenko Cantelli theorem holds, i.e.\ $\sup_{\bmtheta\in A^\beta}\Big\vert \frac{1}{T}\sum_{J=1}^T  k_m(\bmY_i,\bmX_j) -\e_{\bmX}\lk k_m(\bmY_i,\bmX) \rk \Big\vert\to 0$. This is true since $k_m(\bmy,\cdot)$ is Lipschitz with a Lipschitz constant only depending on $\lc \bmW_l\rc_{1\leq l\leq m}$ and $N_\bmtheta(\bmZ_i)$ is Lipschitz in $\bmtheta$ with a Lipschitz constant that only depends on $\lc \beta, \lc \bmZ_i\rc_{1\leq i\leq n}\rc$. This implies that $\mathcal{G}$ is a class of functions which is Lipschitz on $\bmtheta$ and thus its $L_1(P)$ covering number is bounded in terms of the covering number of $A^\beta$ due to \cite[Section 2.7.4]{wellner2013weak}, conditionally on  $\lc \lc \bmZ_i\rc_{1\leq i\leq n},\lc \bmW_l\rc_{1\leq l\leq m}\rc$. Since $A^\beta$ is compact due to Assumption \ref{assexmin}, it has a finite covering number.  
        Thus, we get
            \begin{align*}
                \hmmd_{k_m} (P_{\hat{\bmtheta}_{m,n},n},P)&\geq \limsup_{T\to\infty}\hmmd_{k_m} ( P_{\hat{\bmtheta}_{k_m,n,T},n},P_T)\\
                &=\limsup_{T\to\infty}\hmmd_{k_m} ( P_{\hat{\bmtheta}_{k_m,n,T},n},P) >\hmmd_{k_m} ( P_{\hat{\bmtheta}_{m,n},n},P)+\epsilon,
            \end{align*}
            which is a contradiction. Therefore we have $\lim_{h\to\infty}\bmtheta_{k_m,n,T_h}\in \argmin_{\bmtheta\in\Theta} \hmmd_{k_m}$ $(P_{\bmtheta,n},P) $ for some subsequence $(T_h)_{h\in\N}$ since $A^\beta$ is compact.

        Combining the above, conditionally on $\lc W_l\rc_{1\leq l\leq m}$, $\bmtheta_{n,m}$ is distributed as $\lim_{h\to\infty}\hat{\bmtheta}_{k_m,n,T_h}$, where $k_m$ is a deterministic bounded and translation invariant kernel. Therefore, since $\beta$ was arbitrary we can choose $\beta\lc\lc W_l\rc_{1\leq l\leq m}\rc$ large enough such that $\inf_{\bmtheta\in\Theta}\mmd_{k_m}(P_\bmtheta,P) =\inf_{\bmtheta\in\Theta\cap A^{\epsilon}}$ $\mmd_{k_m}(P_\bmtheta,P)$ and we can apply the generalization bounds of \cite{briol2019statistical} to obtain
        \begin{align*}
            &\p\lc \mmd_{k_m}(P_{\bmtheta_{n,m}},P) \leq  \inf_{\bmtheta\in\Theta}\mmd_{k_m}(P_\bmtheta,P) +2 \sqrt{\frac{2}{n}} \lc 2+\log\lc\frac{2}{\tau}\rc \rc \rc\\
            &=\e\Bigg[ \p\Big( \mmd_{k_m}(P_{\bmtheta_{n,m}},P) - \inf_{\bmtheta\in\Theta}\mmd_{k_m}(P_\bmtheta,P) \\
            &=-2 \sqrt{\frac{2}{n}} \lc 2+\log\lc\frac{2}{\tau}\rc \rc  \leq 0\ \bigg\vert \ \lc \bmW_l\rc_{1\leq l\leq m}  \Big)\Bigg] \\
            &= \e \Bigg[ \p \Bigg( \lim_{h\to\infty} \mmd_{k_m}(P_{\hat{\bmtheta}_{k_m,n,T_h}},P) - \inf_{\bmtheta\in\Theta}\mmd_{k_m}(P_\bmtheta,P) \\
            &-2\lc \sqrt{\frac{2}{n}}+\sqrt{\frac{2}{T_h}}\rc \sqrt{ \sup_{\bmx\in\R^d}k_m(\bmx,\bmx)}  \lc 2+\log\lc\frac{2}{\tau}\rc \rc \leq 0\ \bigg\vert\ \lc \bmW_l\rc_{1\leq l\leq m} \Bigg)   \Bigg] \\
            &\geq \e\Bigg[ \liminf_{h\to\infty} \p\Bigg(  \mmd_{k_m}(P_{\hat{\bmtheta}_{k_m,n,T_h}},P) - \inf_{\bmtheta\in\Theta}\mmd_{k_m}(P_\bmtheta,P) 
            \\
            &-2\lc \sqrt{\frac{2}{n}}+\sqrt{\frac{2}{T_h}}\rc \sqrt{ \sup_{\bmx\in\R^d}k_m(\bmx,\bmx)} \lc 2+\log\lc\frac{2}{\tau}\rc \rc \leq 0\   \bigg\vert \lc \bmW_l\rc_{1\leq l\leq m} \Bigg)  \Bigg] \\
            &\overset{\star}{\geq} 1-\tau,
        \end{align*}
        where the regularity conditions for an application of \cite[Theorem 1]{briol2019statistical} in $\star$ are satisfied for almost every realization of $\lc \bmW_l\rc_{1\leq l\leq m}$ due to Assumption \ref{assexmin}. 
        \item First, $\e\lk \exp\lc i\bmW^\intercal (\bmx-\bmy)\rc\rk=\e\lk \exp\lc -i\bmW^\intercal (\bmx-\bmy)\rc\rk=k(\bmx,\bmy)$ implies $\e \lk k_m(\bmx,\bmy)\rk=k(\bmx,\bmy)$. Next, we observe that 
        \begin{align*}
            \mmd_{k_m}(P_\bmtheta,P)^2  &= \e_{\bmX,\bmX^\prime}\lk  k_m(\bmX,\bmX^\prime)  \rk-2\e_{\bmX,\bmY}\lk  k_m(\bmX,\bmY)  \rk+\e_{\bmY,\bmY^\prime}\lk  k_m(\bmY,\bmY^\prime)  \rk \\
            &= \int f_\bmtheta(\bm w) P^{\bmW}_m (\rmd \bm w)=\frac{1}{m}\sum_{l=1}^m f_\bmtheta(\bmW_i),
        \end{align*}
        where $P^{\bmW}_m:=\frac{1}{m}\sum_{l=1}^m \delta_{\bmW_l}$ denotes the empirical measure of $\lc\bmW_l\rc_{1\leq l\leq m}$ and $\bmX,\bmX^\prime \sim P$ and $\bmY,\bmY^\prime\sim P_\bmtheta$ are independent. Thus, $\mmd_{k_m}(P_\bmtheta,P)^2=\int f_\bmtheta(\bm w) P^{\bmW}_m (\rmd \bm w)$ with 
        $$\e\lk \int f_\bmtheta(\bm w) P^{\bmW}_m (\rmd \bm w)\rk=\e\lk f_\bmtheta(\bmW)\rk=\mmd_k(P_\bmtheta,P)^2.$$ 
        
        Next, we observe that $\vert f_\bmtheta \vert \leq 4$ for all $\bmtheta\in\Theta$ and define the function class 
        $$\mathcal{F}:=\{  f=8^{-1}\lc f_\bmtheta+4\rc \text{ or } f=8^{-1}\lc 4-f_\bmtheta\rc\text{ for some } \bmtheta \in\Theta \},$$
        which solely contains functions mapping to $[0,1]$. Thus, for every $\tau>0$, we can use the Rademacher complexity bounds from \cite[Theorem 8]{bartlett2002rademacher} to obtain that with probability at least $1-\tau$ uniformly over all $f\in\mathcal{F}$ 
        \begin{align*}
              \e\lk f(\bmW)\rk \leq & \frac{1}{m}\sum_{l=1}^m f(\bmW_l)+ \e_{\bmW}\lk \e\lk \sup_{f\in\mathcal{F}} \bigg\vert  \frac{2}{m}\sum_{i=1}^m \sigma_i f(\bmW_l) \bigg\vert\ \Bigg\vert \lc \bmW_l\rc_{1\leq l\leq m} \rk \rk \\
              &+\sqrt{\frac{8\log(2/\tau) }{m}},
        \end{align*}
        where $\lc \sigma_i\rc_{1\leq i\leq m}$ denote i.i.d.\ Rademacher variables. Therefore, multiplying with $8$, we obtain
        \begin{align*}
            \sup_{\bmtheta\in\Theta} \bigg\vert \e\lk f_\bmtheta(\bmW)\rk-\frac{1}{m}\sum_{l=1}^m f_\bmtheta(\bmW_l) \bigg\vert \leq &  8\e_{\bmW}\lk \e\lk \sup_{f\in\mathcal{F}} \bigg\vert  \frac{2}{m}\sum_{i=1}^m \sigma_i f(\bmW_l) \bigg\vert\ \Bigg\vert \lc \bmW_l\rc_{1\leq l\leq m} \rk \rk\\
            &+8\sqrt{\frac{8\log(2/\tau) }{m}} ,
        \end{align*} 
        since $8^{-1}(f_\bmtheta+4)$ as well $8^{-1}(-f_\bmtheta+4)$ are contained in $\mathcal{F}$. Moreover, 
        $$ \sup_{\bmtheta\in\Theta} \bigg\vert  \e\lk f_\bmtheta(\bmW)\rk-\frac{1}{m}\sum_{l=1}^m f_\bmtheta(\bmW_l) \bigg\vert =\sup_{\bmtheta\in\Theta} \big\vert \textbf{}\mmd_k(P_\bmtheta,P)- \mmd_{k_m}(P_\bmtheta,P)\big\vert . $$
        Thus, it remains to bound the Rademacher complexity 
        \begin{align*}
            &8\e_{\bmW}\lk \e\lk \sup_{f\in\mathcal{F}} \bigg\vert  \frac{2}{m}\sum_{i=1}^m \sigma_i f(\bmW_l) \bigg\vert\ \Bigg\vert \lc \bmW_l\rc_{1\leq l\leq m} \rk \rk\\
            &=\e_{\bmW}\lk \e\lk \sup_{\bmtheta\in\Theta}\sup_{a\in\{-1,1\}} \bigg\vert  \frac{2}{m}\sum_{i=1}^m \sigma_i\lc  af_\bmtheta(\bmW_l)+4 \rc\bigg\vert\ \Bigg\vert \lc \bmW_l\rc_{1\leq l\leq m} \rk \rk,
        \end{align*}
        We use \cite[Theorem 12.5]{bartlett2002rademacher} to bound
        \begin{align*}
          &\e_{\bmW}\lk \e\lk \sup_{\bmtheta\in\Theta}\sup_{a\in\{-1,1\}} \bigg\vert  \frac{2}{m}\sum_{i=1}^m \sigma_i\lc a f_\bmtheta(\bmW_l)+4 \rc\bigg\vert\ \Bigg\vert \lc \bmW_l\rc_{1\leq l\leq m} \rk \rk\\
          &\leq  2\e_{\bmW}\lk \e\lk \sup_{\bmtheta\in\Theta} \bigg\vert  \frac{1}{m}\sum_{i=1}^m \sigma_i  f_\bmtheta(\bmW_l) \bigg\vert\ \Bigg\vert \lc \bmW_l\rc_{1\leq l\leq m} \rk \rk  +8/\sqrt{m} \\
            &=:2 \e_{\bmW}\lk  \hat{R}_m(\Theta)\rk+8/\sqrt{m},
        \end{align*}
        with 
        $$\hat{R}_m(\Theta):=\e\lk \sup_{\bmtheta\in\Theta} \bigg\vert  \frac{1}{m}\sum_{i=1}^m \sigma_i  f_\bmtheta(\bmW_l) \bigg\vert\ \Bigg\vert \lc \bmW_l\rc_{1\leq l\leq m} \rk$$
        denoting the empirical Rademacher complexity of $\mathcal{F}_\Theta$. Thus, we can apply Dudley's theorem \cite[Theorem 3.1]{koltchinskii2011} to obtain
        $$ \hat{R}_m(\Theta)\leq 12m^{-1/2}\int_0^\infty\sqrt{ N\lc \epsilon,\mathcal{F}_\Theta,\Vert \cdot\Vert_{P_m^W} \rc } \rmd\epsilon \leq \frac{12f(m)}{m^{1/2}}, $$
        as $\vert f_\theta\vert\leq 4$.
        Combining the above we obtain with probability at least $1-\tau$
        $$ \sup_{\bmtheta\in\Theta} \bigg\vert \e\lk f_\bmtheta(\bmW)\rk-\frac{1}{m}\sum_{l=1}^m f_\bmtheta(\bmW_l) \bigg\vert \leq  \frac{24f(m)+8\lc 1+\sqrt{8\log(2/\tau)}\rc }{\sqrt{m}}.$$
   \end{enumerate}     
\end{proof}

\section{Additional plots}
\label{appaddplots}

\begin{figure}
\centering
    \subfloat[Gaussian mixture distribution $2$-dim with $5$ mixture components]{
    \includegraphics[scale=0.45]{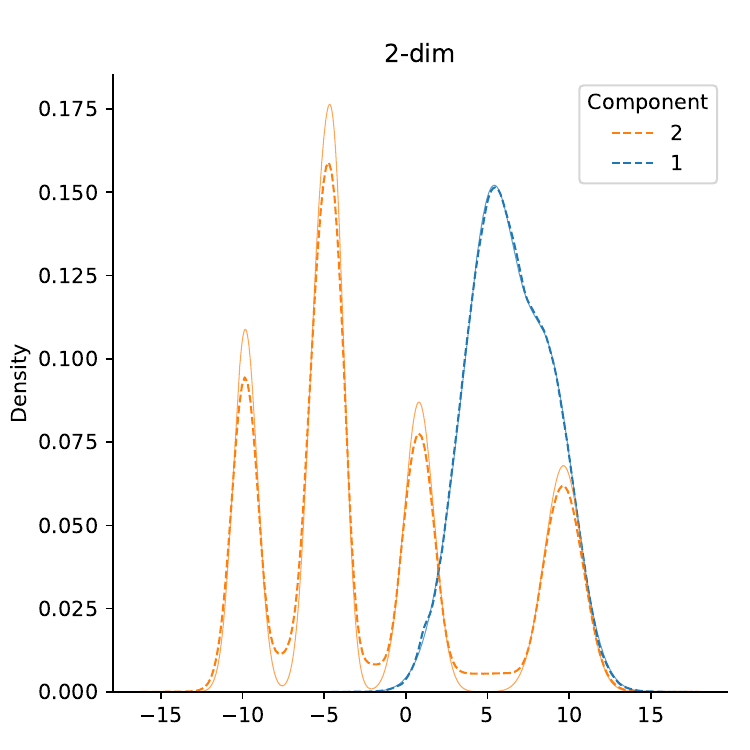}
    \includegraphics[scale=0.45]{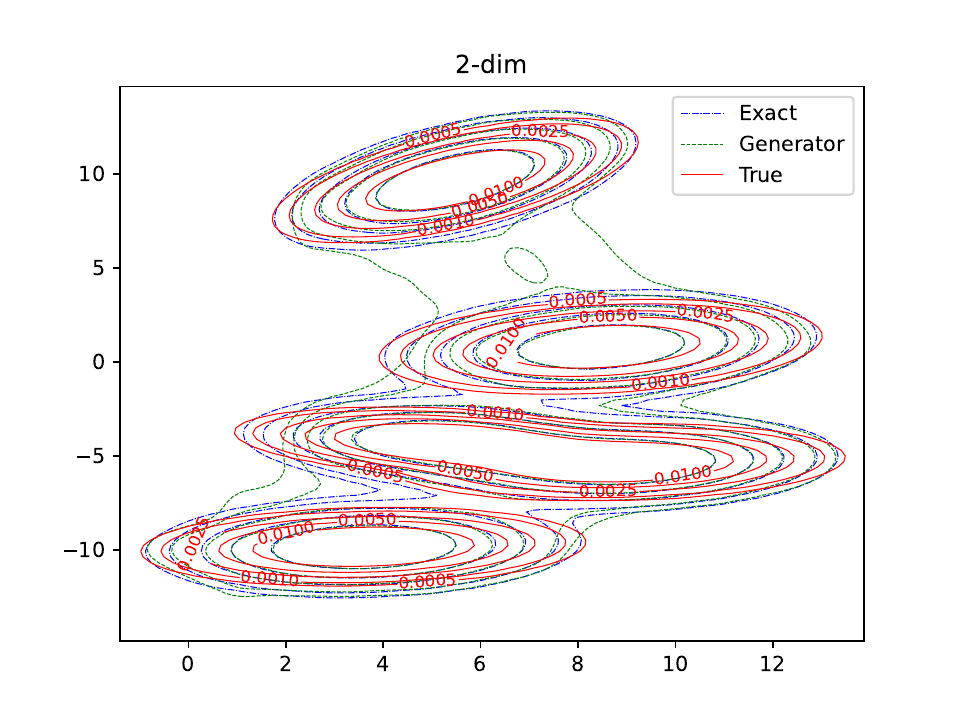}  
    }\vfill  
    \subfloat[Gaussian mixture distribution $2$-dim with $10$ mixture components]{
        \includegraphics[scale=0.45]{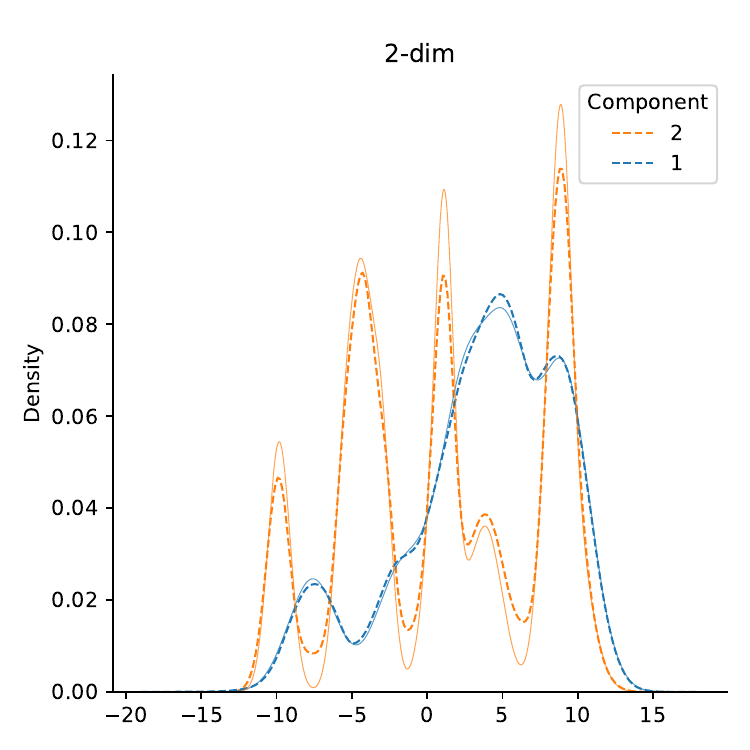}
    \includegraphics[scale=0.45]{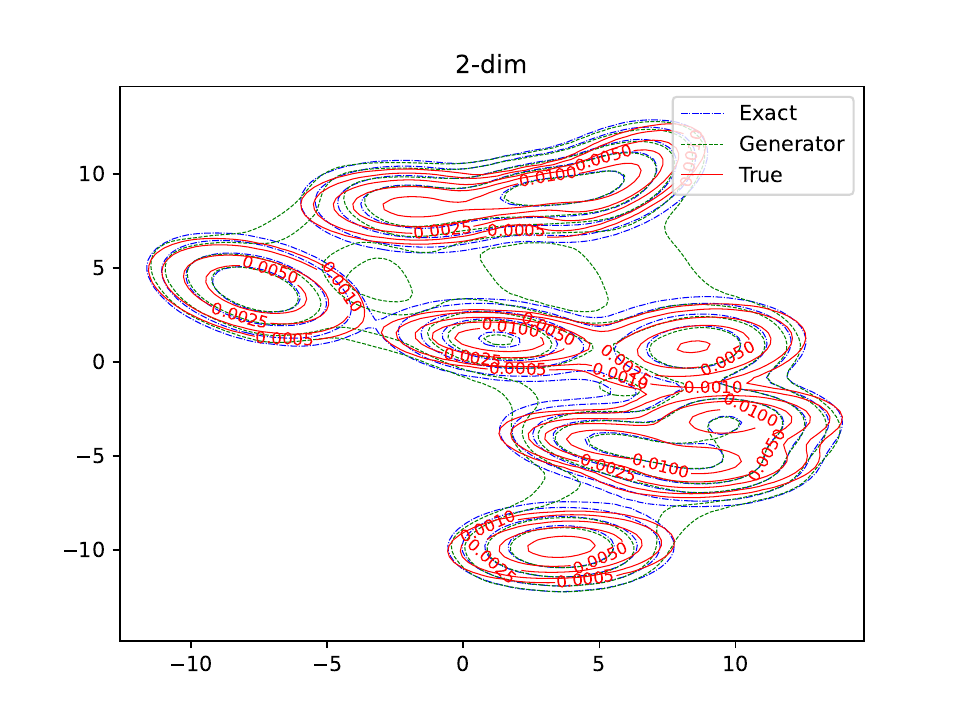}  
    }\vfill 
    \subfloat[Gaussian mixture distribution $10$-dim with $5$ mixture components]{
        \includegraphics[scale=0.45]{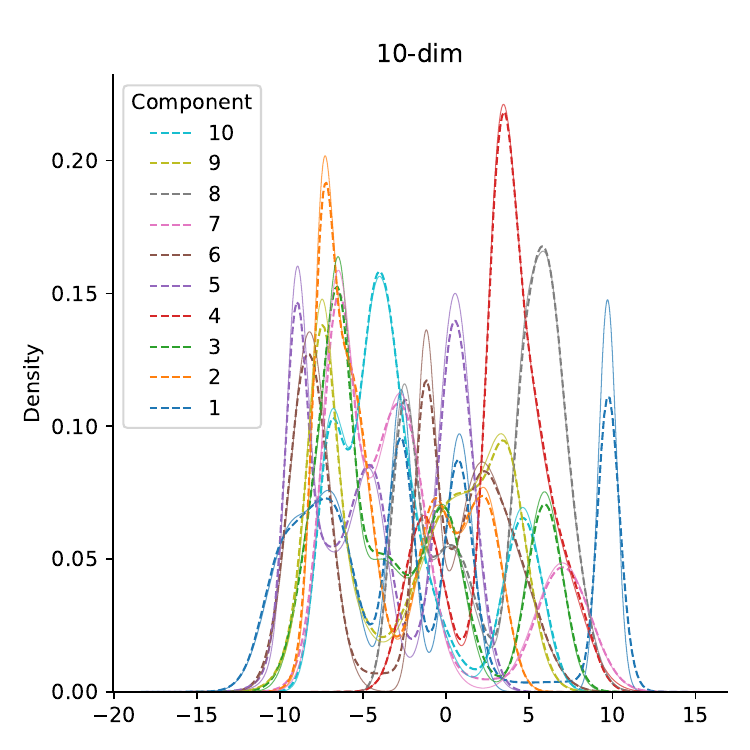}
    \includegraphics[scale=0.45]{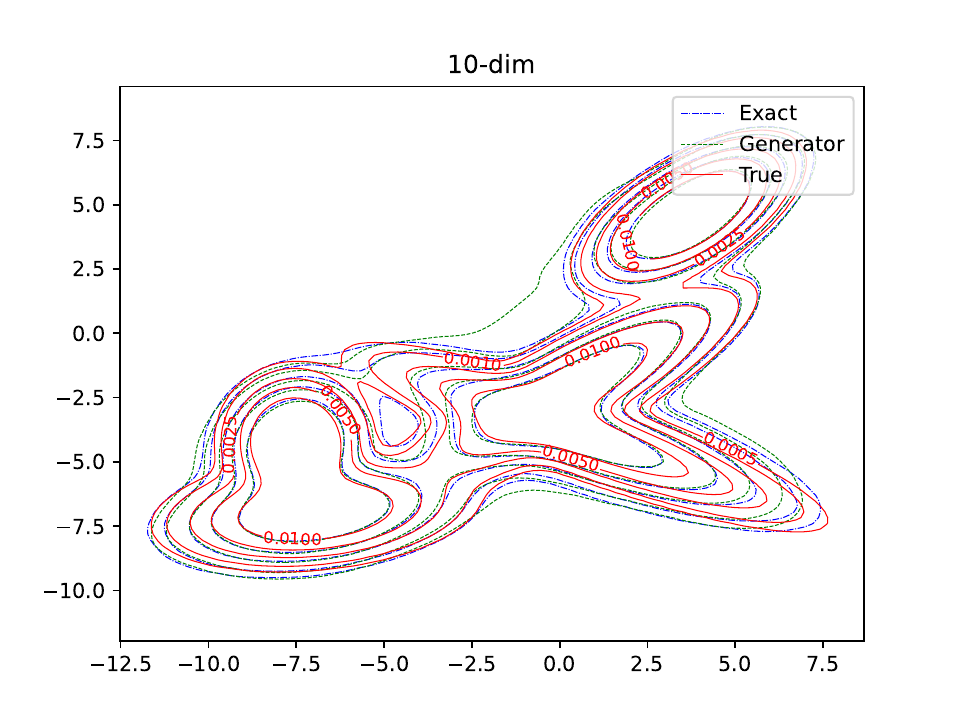}  
    }\vfill 
    \caption{Estimated marginal (left) and bivariate (right) densities of the Gaussian mixture distribution in dimensions $2,5,10$ with $2,5,10$ mixture components. The bivariate densities correspond to the last two components of the corresponding random vector. The solid red lines show the true densities and the dashed green and dash-dotted blue lines correspond to estimated densities from the generator and the exact simulation algorithm.}
\label{figgaussadd}
\end{figure}

\end{document}